\documentclass{article}

 \usepackage[preprint,nonatbib]{neurips_2026}

\usepackage{algorithm,algpseudocode}
\newcommand{\bz}{\mathbf{z}}
\newcommand{\bx}{\mathbf{x}}
\newcommand{\bs}{\mathbf{s}}
\usepackage{amsthm}
\newtheorem{theorem}{Theorem}
\newtheorem{lemma}[theorem]{Lemma}
\usepackage{amssymb}

\newtheorem{corollary}{Corollary}
\newtheorem{remark}{Remark}
\usepackage{enumitem}
\usepackage[utf8]{inputenc} 
\usepackage[T1]{fontenc}    
\usepackage{hyperref}       
\usepackage{url}            
\usepackage{booktabs}       
\usepackage{amsfonts}       
\usepackage{amsmath}
\usepackage{nicefrac}       
\usepackage{microtype}      
\usepackage{xcolor}         
\usepackage{cleveref}
\usepackage{graphicx}
\usepackage{multicol,multirow}
\usepackage{xcolor}
\usepackage{makecell}

\title{FlowSGS: Improving Flow Matching Priors for Inverse Imaging with Stochastic Interpolants}

\author{%
  Tianao Li$^{1,3}$, Xinhui Qian$^{2}$, Emma Alexander$^{1,3}$\\
  $^{1}$Department of Computer Science, $^{2}$Department of Statistics and Data Science, Northwestern University \\
  $^{3}$ NSF-Simons AI Institute for the Sky (SkAI) \\
  \texttt{ealexander@u.northwestern.edu} \\
}

\begin{document}

\maketitle

\begin{abstract}
Flow matching has emerged as the state-of-the-art generative model and has been used for plug-and-play (PnP) priors to solve inverse problems in computational imaging. 
However, existing flow-based inverse solvers assume linear forward models and/or make simplifying approximations in posterior sampling.
To circumvent these problems, we introduce \textbf{FlowSGS}, a flow-based posterior sampling method using Split Gibbs Sampling (SGS) to decompose the posterior into a likelihood step and a prior step.
Specifically, we sample from the likelihood step using Langevin dynamics and leverage the Stochastic Interpolants (SI) framework to integrate a pretrained flow model into the prior step.
We provide a form for the prior step that 
uses SI's reverse-time SDE, and 
show connections to previous PnP methods.
%
Moreover, with the aid of the flow prior's straight probability paths and a novel timestep correction technique for the reverse-time SDE, FlowSGS requires fewer network evaluations in its prior step than plug-and-play diffusion samplers.
Our experiments show state-of-the-art performance on a range of inverse problems. 
For the first time, we provide an experiment on a nonlinear inverse problem (Fourier phase retrieval) for flow-based inverse solvers.
\end{abstract}

\section{Introduction}
\label{sec:intro}

Computational imaging systems aim to reconstruct an image $\mathbf{x} \in \mathbb{R}^n$ encoded in incomplete and noisy measurements $\mathbf{y} = \mathcal{A}(\mathbf{x}) + \mathbf{n} \in \mathbb{R}^m$, where $\mathcal{A}(\cdot)$ is the forward model and $\mathbf{n}$ represents noise.
These inverse problems are often ill-posed due to information loss, leading to uncertainty in the reconstructed image $\hat{\mathbf{x}}$. 
To fully explore the solution space, it is often desired to sample from the Bayesian posterior
\begin{equation}
    \hat{\mathbf{x}} \sim p(\mathbf{x}|\mathbf{y}) \propto  p(\mathbf{y}|\mathbf{x})p(\mathbf{x}),
    \label{eq:inverse_problem}
\end{equation}
where $p(\mathbf{y}|\mathbf{x})$ is the data likelihood and $p(\mathbf{x})$ is the prior.
This is particularly important for scientific and medical imaging, where high-stakes decisions are being made in downstream applications.

\begin{figure*}[t]
    \centering
    \includegraphics[width=1.0\linewidth]{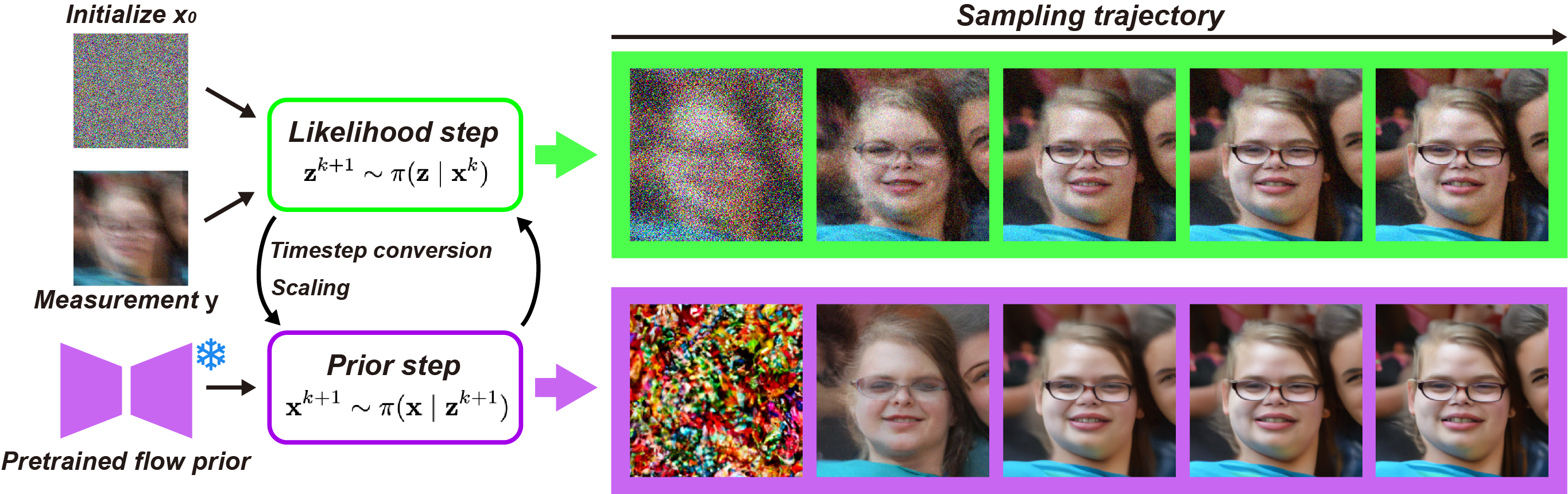}
    \caption{
        \textbf{A high-level overview of FlowSGS, a principled posterior sampler using flow matching as a plug-and-play prior.}
        Our method iterates between a {\color{green} likelihood step} and a {\color{violet} prior step}, where the former uses Langevin dynamics and the latter involves solving a reverse-time SDE using a pretrained flow prior (left).
        We show sampling trajectories of FlowSGS on motion deblurring (right).
    }
    \label{fig:teaser}
\end{figure*}

Deep generative models have significantly improved posterior sampling methods by implicitly capturing complex priors. 
Score-based diffusion models~\cite{song2019generative,ho2020denoising,song2021scorebased} learn the score function of the target distribution and transform samples of a base distribution (often a standard normal distribution) to samples of the target distribution by solving the reverse-time SDE of the diffusion process.
Diffusion models are capable of learning complex data distributions of images and videos, and have been widely used to learn prior distributions $p(\mathbf{x})$ as plug-and-play (PnP) priors.
Most diffusion samplers draw posterior samples by conditional sampling on the measurements $\mathbf{y}$, either by taking gradient steps towards higher data likelihood (e.g., DPS~\cite{chung2023diffusion} and many others~\cite{jalal2021robust,kawar2022denoising,graikos2022diffusion,mardani2024a}), or projection onto the measurement space (e.g.,~\cite{song2022solving,chung2022score,chung2022improving}).
PnP-DM~\cite{wu2024principled} and several other works~\cite{zhu2023denoising,xu2024provably,coeurdoux2024plug,chu2025split} leverage variable splitting to split the sampling into a likelihood step and a prior step, where a pretrained diffusion model is incorporated as a plug-and-play prior.
InverseBench~\cite{zheng2025inversebench} provides a comprehensive benchmark of plug-and-play diffusion samplers on a variety of real-world inverse problems, where PnP-DM~\cite{wu2024principled} and DAPS~\cite{zhang2025improving} stand out as the state-of-the-art diffusion samplers.

Flow matching is an emerging generative technique that improves upon diffusion models by constructing straighter probability paths, which provide more efficient sampling, easier training, and higher quality data generation~\cite{lipman2023flow,liu2023flow}.
Although there already exist a number of flow-based inverse solvers, they rely on mathematical approximations~\cite{pokle2024trainingfree,ben2024d,zhang2024flow,yan2025fig,martin2025pnpflow,patel2025flowchef,kim2025flowdps,askari2025latent} or assume a linear forward model~\cite{pokle2024trainingfree,zhang2024flow,yan2025fig,martin2025pnpflow,askari2025latent}.  
Specifically, PnP-Flow~\cite{martin2025pnpflow} iterates between a data-fidelity gradient step, an interpolation step, and a denoising step that uses a flow network as a time-dependent denoiser.
FlowChef~\cite{patel2025flowchef} uses gradient-free vector-field steering to push the probability-flow ODE trajectory toward measurement-consistent points.
FlowDPS~\cite{kim2025flowdps} derives the flow version of Tweedie's formula, and estimates both the clean image and noise component to simultaneously maintain data-consistency and generative quality. 
Other work~\cite{erbach2025solving} turns flow networks into a principled prior and learns the posterior via variational inference. 
Importantly, none of these models has been demonstrated on a nonlinear inverse problem.

Stochastic Interpolants (SI) provide a recent unifying framework that reveals the deep connection between diffusion sampling and flow matching~\cite{albergo2025stochastic}. We leverage their connection between score functions and velocity fields to present \textbf{FlowSGS}, a more principled posterior sampler using flow matching as a plug-and-play prior. Specifically, our variable splitting technique avoids the major simplifying assumptions and approximations of previous flow matching samplers.
We split the posterior sampling into a \textcolor{green}{likelihood step} and a \textcolor{violet}{prior step} via split Gibbs sampling (SGS), and solve the two steps iteratively with Langevin dynamics and a pretrained flow network. 
In contrast to prior flow-based methods that rely on likelihood gradients~\cite{kim2025flowdps}, gradient-free trajectory steering~\cite{patel2025flowchef}, or variational approximation of the posterior~\cite{erbach2025solving}, our SGS-based splitting provides a principled sampling method that targets the actual Bayesian posterior.
With the aid of a novel timestep correction technique, our integration of flow priors provides faster sampling with fewer steps needed in the prior step than PnP-DM~\cite{wu2024principled}, which aligns with the theory that straighter probability paths used in flow priors lead to better sampling.
A high-level overview of FlowSGS is shown in~\Cref{fig:teaser}.
%

Our experiments in~\Cref{sec:exp} show state-of-the-art image reconstruction compared to both diffusion- and flow-based methods. We demonstrate accurate posterior sampling for a toy example with known posterior, show high image quality on linear (motion deblurring, Gaussian deblurring, super-resolution, compressed-sensing MRI) and nonlinear (Fourier phase retrieval) tasks.

\section{Preliminaries}
\label{sec:pre}

We present existing work that provides foundations for our method and theory.

\subsection{Stochastic interpolants}
Stochastic interpolants (SI)~\cite{albergo2023building,albergo2025stochastic} provide a unified framework for diffusion models and flow matching, which both consider the process of interpolating data distribution $p(\mathbf{x}_0)$ and noise (usually a standard normal distribution $\mathcal{N}(0,\mathbf{I})$) for generative modeling.
Such interpolation processes have the form
 \begin{equation}
    \mathbf{x}_t = \alpha_t \mathbf{x}_0 + \sigma_t \boldsymbol{\epsilon}, \quad
    \boldsymbol{\epsilon} \sim \mathcal{N}(0, \mathbf{I}),
    \label{eq:si}
\end{equation}
where $(\alpha_t, \sigma_t)$ are a pair of smooth functions defined in
$t \in [0, 1]$, satisfying: (i) $\alpha_t^2 + \sigma_t^2 > 0$ \quad for all $t \in [0,1]$; (ii) $\alpha_t$ and $\sigma_t$ are differentiable on $[0,1]$; and (iii) $\alpha_0 = \sigma_1 = 1, \alpha_1 = \sigma_0 = 0$.
Under these conditions, the stochastic process in~\eqref{eq:si}
interpolates without bias between clean data $p(\mathbf{x_0})$ at $t=0$ and isotropic Gaussian noise $\mathcal{N}(0, \mathbf{I})$ at $t=1$.
To draw samples from the data distribution using the interpolant process, one can solve the reverse-time SDE~\cite{anderson1982reverse,albergo2025stochastic} given by
\begin{equation}
    \mathrm{d}\mathbf{x}_t = \left[\mathbf{v}(\mathbf{x}_t, t) - \frac{1}{2}w_t\mathbf{s}(\mathbf{x}_t, t)\right]\mathrm{d}t + \sqrt{w_t}\,\mathrm{d} \bar{\mathbf{w}_t}
    \label{eq:reverse-sde}
\end{equation}
where $\mathbf{v}(\mathbf{x}, t) = \mathbb{E} \left[ \dot{\mathbf{x}}_t \vert \mathbf{x}_t = \mathbf{x} \right]$ is the velocity field, $\mathbf{s}(\mathbf{x},t) = \nabla_{\mathbf{x}_t} \log p_t(\mathbf{x})$ is the score function, and $w_t\geq 0$ is a diffusion coefficient picked at sampling time, and $\bar{\mathbf{w}_t}$ is the reverse-time Wiener process.
Different choices of the interpolant schedule $(\alpha_t, \sigma_t)$ have been used in diffusion models and flow matching (e.g., the VP-SDE used in score-based diffusion models~\cite{ho2020denoising,song2021scorebased}, the linear interpolant used in Rectified flow~\cite{liu2023flow}).
We summarize common choices of the interpolant schedules $(\alpha_t, \sigma_t)$ and diffusion coefficients $w_t$ in~\Cref{app:interpolant}.
While sampling with~\eqref{eq:reverse-sde} requires training both a flow model for the velocity field $\mathbf{v}(\mathbf{x}, t)$ and a diffusion model for the score function $\mathbf{s}(\mathbf{x}, t)$, prior work~\cite{albergo2025stochastic,ma2024sit} has shown that the velocity and the score convert from one to the other:
\begin{equation} 
    \mathbf{s}(\mathbf{x}_t, t) = \frac{\alpha_t \mathbf{v}(\mathbf{x}_t, t) - \dot{\alpha}_t \mathbf{x}}{\sigma_t \gamma_t},
    \label{eq:score-from-velocity}
\end{equation}
where $\gamma_t \triangleq \dot\alpha_t \sigma_t - \alpha_t \dot\sigma_t < 0$.~\eqref{eq:score-from-velocity} allows sampling with the reverse-time SDE in~\eqref{eq:reverse-sde} using a single pretrained score or velocity model.
For instance, replacing $\mathbf{s}(\cdot,\cdot)$ with $\mathbf{v}(\cdot,\cdot)$ in~\eqref{eq:reverse-sde} yields
\begin{equation}
    \mathrm{d}\mathbf{x}_t 
    = \left[\left(1 - \frac{w_t \alpha_t}{2\sigma_t \gamma_t}\right)\mathbf{v}(\mathbf{x}_t, t) + \frac{w_t \dot\alpha_t}{2\sigma_t \gamma_t}\,\mathbf{x}_t\right]\mathrm{d}t + \sqrt{w_t}\,\mathrm{d}\bar{\mathbf{w}}_t.
    \label{eq:sde_v}
\end{equation}
When the diffusion coefficient $w_t=0$,~\eqref{eq:sde_v} reduces to the probability-flow ODE:
\begin{equation}
    \mathrm{d}\mathbf{x}_t = \mathbf{v}(\mathbf{x}_t, t) \mathrm{d}t,
    \label{eq:pf-ode}
\end{equation}
which has been widely used in the sampling process of flow matching.


\subsection{Split Gibbs sampling}

Variable splitting has been widely applied to inverse problem solving, even before the era of deep generative priors.
Algorithms such as Half-Quadratic Splitting (HQS)~\cite{charbonnier1994two,geman1995nonlinear}, and the Alternating Direction Method of Multipliers (ADMM)~\cite{boyd2004convex,neal2011distributed} split the regularized MAP optimization into a likelihood step and a denoising step, where a pretrained denoiser can be plugged in~\cite{venkatakrishnan2013plug,chan2016plug,zhang2021plug,kamilov2023plug}.
Unrolled methods further turn these plug-and-play solvers into a neural network, and allow task-specific training of the plug-in denoisers~\cite{monga2021algorithm}.
However, these methods only produce a point estimate, rather than drawing random samples from the Bayesian posterior.

Split Gibbs Sampling (SGS)~\cite{vono2019split} provides a solution for this and has already seen applications in posterior sampling with classic priors~\cite{pereyra2023split}, pretrained denoisers~\cite{bouman2023generative,faye2024regularization}, or diffusion priors~\cite{xu2024provably,wu2024principled,chu2025split}. 
SGS decomposes Bayesian posterior sampling in~\eqref{eq:inverse_problem} into a likelihood step and a prior step, and alternates between sampling from these two steps until convergence.
Specifically, in the $k$-th iteration, SGS has updates
\begin{align}
    \mathbf{z}^{k+1} &\sim \pi(\mathbf{z} \vert \mathbf{x}^{k}) \propto \exp\left[-f(\mathbf{z}; \mathbf{y}) - \frac{\|\mathbf{x}^{k} - \mathbf{z}\|^2}{2\rho_k^2}\right], && \triangleright~\text{\textcolor{green}{\bf Likelihood step}}
    \label{eq:llh}
    \\
    \mathbf{x}^{k+1} &\sim \pi(\mathbf{x} \vert \mathbf{z}^{k+1}) \propto \exp\left[-g(\mathbf{x}) - \frac{\|\mathbf{x} - \mathbf{z}^{k+1}\|^2}{2\rho_k^2}\right], && \quad \triangleright~\text{\textcolor{violet}{\bf Prior step}}
    \label{eq:prior}
\end{align}
where $f(\mathbf{z};\mathbf{y}) = - \log p(\mathbf{y}|\mathbf{z})$ is the negative log data likelihood, $g(\mathbf{x}) = - \log p(\mathbf{x})$ is the negative log prior, and $\{\rho_k\}_{k=0}^{K-1}$ are augmented Lagrangian variables, optionally with an annealing schedule~\cite{wu2024principled}. 
By introducing a modular structure in the sampling process, SGS solves the likelihood step and prior step independently, and simplifies the complexity of each subproblem.

\begin{algorithm}[b]
    \caption{Posterior sampling with FlowSGS.}
    \label{alg:flowsgs}
    \begin{algorithmic}[1]
        \Require measurement $\mathbf{y}$, forward model $\mathcal{A}$, velocity network $\mathbf{v}_\theta(\cdot,\cdot)$ pretrained with interpolant schedule $(\alpha_t, \sigma_t)$, diffusion coefficient $w_t$, number of iterations $K$, coupling parameters $\{\rho_k\}_{k=0}^{K-1}$
        \State $\mathbf{x}^0 \sim \mathcal{N}(0,\mathbf{I})$ \Comment{Initialization.}
        \For {$k = 0, 1, \ldots, K-1$}
            \State $\mathbf{z}^{k+1} \sim \pi(\mathbf{z} \vert \mathbf{x}^{k}) \propto \exp\left[-f(\mathbf{z}; \mathbf{y}) - \frac{\|\mathbf{x}^{k} - \mathbf{z}\|^2}{2\rho_k^2}\right]$ \Comment{\textcolor{green}{\bf Sample with Langevin dynamics.}}
            \State $t_k \leftarrow \left(\frac{\sigma_t}{\alpha_t}\right)^{-1}(\rho_k)$ \Comment{\textcolor{violet}{\bf Compute time $t_k$ corresponding to noise level $\rho_k$.}}
            \State $\mathbf{x}_{t_k} \leftarrow \alpha_{t_k}\, \mathbf{z}^{k+1}$ \Comment{\textcolor{violet}{\bf Scale $\mathbf{z}^{k+1}$ to match noise level at $t_k$.}}
            \State $\mathbf{x}^{k+1} \leftarrow \text{Euler–Maruyama}(\text{SDE},\mathbf{x}_{t_k},t_k,\epsilon)$ 
            \Comment{\textcolor{violet}{\bf Solve SDE with timestep correction.}}
            \State \hspace{\algorithmicindent} where the SDE is defined by 
            $\mathrm{d}\mathbf{x}_t 
            = \left[\left(1 - \frac{w_t \alpha_t}{2\sigma_t \gamma_t}\right)\mathbf{v}(\mathbf{x}_t, t) + \frac{w_t \dot\alpha_t}{2\sigma_t \gamma_t}\,\mathbf{x}_t\right]\mathrm{d}t + \sqrt{w_t}\,\mathrm{d}\bar{\mathbf{w}}_t.$
        \EndFor
        \State \Return $\mathbf{x}^K$
    \end{algorithmic}
\end{algorithm}

\section{Method}
\label{sec:method}

FlowSGS uses Split Gibbs Sampling (SGS) to alternate between a \textcolor{green}{likelihood step} and a \textcolor{violet}{prior step} coupled by augmented Lagrangian parameters $\{\rho_k\}$.
We use Langevin dynamics to sample from the likelihood step because it applies to both linear and nonlinear forward models, and use an exponential decay annealing strategy for the coupling parameter $\rho_k = \max(\rho_0 \alpha^k, \rho_{\text{min}})$. 
Among existing methods, this is most similar to PnP-DM~\cite{wu2024principled}, which leverages the same problem decomposition but uses a diffusion model in its prior step.

\vspace{-1.3mm}
\paragraph{Prior step} We present four significant interventions in the prior step, expressed as lines 4-7 in~\Cref{alg:flowsgs}, which accommodate the replacement of diffusion models with more efficient and accurate sampling via flow matching. Line 4 computes the correct starting timestep $t_k$ for the prior step in the $k$-th iteration, adapted to the current noise level $\rho_k$ and the interpolant schedule of any pretrained flow model, see~\Cref{thm:denoising}. 
Line 5 scales the output of the likelihood step $\mathbf{z}^k$ such that its noise level matches the noise $\sigma_{t_k}$ in the interpolant at $t_k$.
Line 6 samples from the prior step by solving the reverse-time SDE with the Euler-Maruyama method~\cite{kloeden1977numerical}. 
Line 7 shows the SI's reverse-time SDE using only the velocity field, to our knowledge, used here for the first time in posterior sampling for inverse problems. 
\Cref{cor:collapse} specifies how our general framework can be reduced to PnP-DM with a specific choice of diffusion coefficient.
See more implementation details of our method in~\Cref{app:details} and analysis of convergence in~\Cref{cor:convergence}.
Our theory-driven improvements are shown to be empirically effective in our benchmark experiments in~\Cref{sec:results_linear}. 
Our ablation studies in~\Cref{sec:ablation} confirm that each of our four interventions is necessary.

\vspace{-1.3mm}
\paragraph{Timestep correction} At a high level, these interventions lead to improvements over diffusion-based methods, both in image quality and in the number of discretization steps needed in the prior step SDE solver.
As expected, FlowSGS's flow priors can use fewer sampling steps and generate better results than PnP-DM's diffusion priors (see~\Cref{sec:ablation}). 
Note that this makes the SDE discretization error much more significant for flow matching than diffusion models.
To mitigate the SDE's discretization error, we propose a timestep correction strategy to be applied before solving the SDE in Line 6 of~\Cref{alg:flowsgs}, illustrated in~\Cref{fig:timestep_conversion}a.
We empirically validated that our timestep correction significantly boosts FlowSGS's reconstruction quality when the number of discretization steps is smaller than 16. 


\vspace{-1.3mm}
\paragraph{Likelihood step} The likelihood step is also significant compared to existing flow-based posterior samplers. 
We are the first flow-based method to incorporate measurement likelihood via SGS. 
This change enables key applications in inverse imaging: more accurate posterior estimation (\Cref{sec:toy}) and nonlinear inverse problems (\Cref{sec:fpr}) with bimodal posteriors.
Many methods rely on simplifying approximations: some use linear projections between measurement and recovery space~\cite{pokle2024trainingfree,zhang2024flow,yan2025fig,martin2025pnpflow,askari2025latent}, while others incorporate the likelihood gradient as guidance for the reverse-time SDE/ODE~\cite{kim2025flowdps} or use gradient-free vector-field steering to enforce measurement consistency~\cite{patel2025flowchef}. 
In contrast, our method is a \textit{sampling} method that directly targets the actual Bayesian posterior. 
By avoiding simplifying assumptions, our likelihood step requires more computation (Langevin sampling rather than a single gradient step), but provides a more accurate posterior estimate in a more principled and widely-applicable way. 
See more discussions on \textit{guidance} and \textit{sampling} methods in~\cite{wu2024principled}. 
In incorporating the general likelihood step, we present the first SGS-based \textit{sampling} method using flow priors.



\vspace{-1.3mm}
\subsection{Prior step with flow matching}
\label{sec:prior}

The key observation that drives the design of FlowSGS's prior step is the fact that sampling from the SGS prior step~\eqref{eq:prior} is equivalent to running SI's reverse-time SDE~\eqref{eq:reverse-sde} at the specific starting point related to the coupling parameter $\rho_k$, which we summarize below in~\Cref{thm:denoising}.

\begin{theorem}[Prior step equivalence to SI reverse-time SDE]
    Suppose $(\alpha_t, \sigma_t)$ defines an interpolant process with gaussian probability path in~\eqref{eq:si}, and $\alpha_{t^*}>0$ and $t^*$ satisfies
    $\sigma_{t^*}/\alpha_{t^*}=\rho_k$, running the reverse-time SDE in~\eqref{eq:reverse-sde} from
    $\mathbf{x}_{t^*}=\alpha_{t^*}\bz^{k+1}$ to $t=0$ produces samples from
    the SGS prior step $\pi(\bx\vert\bz^{k+1})$ in~\eqref{eq:prior}.
    \label{thm:denoising}
\end{theorem}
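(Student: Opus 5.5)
The plan is to identify the SGS prior-step target with a posterior under a Gaussian denoising model, and then to recognize that posterior as the conditional law that the reverse-time SDE transports.

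First, rewrite \eqref{eq:prior} as
\[
\pi(\bx\vert\bz^{k+1}) \propto p(\bx)\,\exp\!\left[-\frac{\|\bx-\bz^{k+1}\|^2}{2\rho_k^2}\right].
\]
This is exactly the posterior of $\bx_0=\bx\sim p$ given an observation $\bz=\bx_0+\rho_k\boldsymbol{\epsilon}$ with $\boldsymbol{\epsilon}\sim\mathcal{N}(0,\mathbf{I})$.

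Second, relate this denoising model to the interpolant \eqref{eq:si} at time $t^*$. Since $\alpha_{t^*}>0$, we can divide:
\[
\frac{\bx_{t^*}}{\alpha_{t^*}}=\bx_0+\frac{\sigma_{t^*}}{\alpha_{t^*}}\boldsymbol{\epsilon}=\bx_0+\rho_k\boldsymbol{\epsilon}.
\]
The map $\bz\mapsto\alpha_{t^*}\bz$ is a bijection, so the event $\{\bz=\bz^{k+1}\}$ coincides with $\{\bx_{t^*}=\alpha_{t^*}\bz^{k+1}\}$. Hence
\[
p(\bx_0\mid \bx_{t^*}=\alpha_{t^*}\bz^{k+1})=\pi(\bx_0\vert \bz^{k+1}).
\]
This can also be checked directly from the Gaussian density $p(\bx_{t^*}\mid\bx_0)\propto\exp[-\|\bx_{t^*}-\alpha_{t^*}\bx_0\|^2/(2\sigma_{t^*}^2)]$: factoring out $\alpha_{t^*}^2$ turns the exponent into $-\|\bz^{k+1}-\bx_0\|^2/(2\rho_k^2)$, and this step is where the hypothesis $\alpha_{t^*}>0$ is needed. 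Existence of $t^*$ follows because $\sigma_t/\alpha_t$ runs from $0$ to $\infty$; it is unique if the ratio is monotone, although uniqueness is not needed.

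Third, invoke the defining property of the reverse-time SDE \eqref{eq:reverse-sde}, from \cite{anderson1982reverse,albergo2025stochastic}. For any $w_t\ge 0$ it generates a process whose joint law on $[0,t^*]$ matches the forward interpolant process, because the drift correction $-\tfrac12 w_t\bs$ together with the noise $\sqrt{w_t}\,\mathrm{d}\bar{\mathbf{w}}_t$ preserves the Fokker–Planck marginals of $\mathrm{d}\bx_t=\mathbf{v}\,\mathrm{d}t$. Consequently, started at a deterministic point $\bx_{t^*}=\mathbf{a}$, its time-$0$ law is the conditional $p(\bx_0\mid\bx_{t^*}=\mathbf{a})$. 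Combining this with the second step and $\mathbf{a}=\alpha_{t^*}\bz^{k+1}$ gives the claim.

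\textbf{Main obstacle.} I expect the hard step to be the third: the conditional (bridge) statement, since marginal matching alone concerns the law started from $p_{t^*}$, not from a point mass. I would argue it via disintegration. The reverse SDE is a Markov process whose path law, started from $p_{t^*}$, equals the time reversal of the forward path law. Conditioning both on the value at $t^*$, the Markov property makes the reverse dynamics from a point $\mathbf{a}$ give the regular conditional law for $p_{t^*}$-a.e.\ $\mathbf{a}$, and hence for every $\mathbf{a}$ by continuity of the kernels. This yields $p(\bx_0\mid\bx_{t^*}=\mathbf{a})$.

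There is one subtlety in this argument. Path-level time reversal, as opposed to marginal matching, requires that the forward process be a genuine diffusion with diffusion coefficient $w_t$. SI guarantees this only in the sense that a forward SDE with drift $\mathbf{v}+\tfrac12 w_t\bs$ has the same marginals as \eqref{eq:si}. The reversal of that SDE then gives the correct bridge for the Markov process it defines, and that process, conditioned at $t^*$, reproduces the Gaussian conditional because its transition kernel from $0$ to $t^*$ is the interpolant's Gaussian kernel. I would state this as an assumption inherited from the SI framework rather than prove it in full.
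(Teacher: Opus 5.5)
Your first two steps are the paper's own argument. The paper also reads \eqref{eq:prior} as a Gaussian denoising posterior and uses Bayes' rule with $\sigma_{t^*}/\alpha_{t^*}=\rho_k$ to identify $p(\bx_0\vert\bx_{t^*}=\alpha_{t^*}\bz^{k+1})$ with $\pi(\bx_0\vert\bz^{k+1})$. It then simply asserts that the reverse SDE run from $t^*$ ``follows'' this conditional. You are right that this assertion is the crux, but your justification of it fails.

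The claim that for \emph{any} $w_t\ge 0$ the reverse SDE started at a point has time-$0$ law $p(\bx_0\vert\bx_{t^*}=\mathbf{a})$ is false. With $w_t\equiv 0$ it is the probability-flow ODE~\eqref{eq:pf-ode}, which maps the deterministic start $\alpha_{t^*}\bz^{k+1}$ to a deterministic output, so it cannot sample $\pi(\bx\vert\bz^{k+1})$. The error sits in your last paragraph. The forward diffusion $\mathrm{d}\bx_t=[\mathbf{v}+\tfrac12 w_t\bs]\,\mathrm{d}t+\sqrt{w_t}\,\mathrm{d}\mathbf{w}_t$ has the interpolant's one-time marginals. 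However, its transition kernel from $0$ to $t^*$ is in general \emph{not} $\mathcal{N}(\alpha_{t^*}\bx_0,\sigma_{t^*}^2\mathbf{I})$, because its drift depends on the data distribution through $\mathbf{v}$ and $\bs$. Marginal matching says nothing about the coupling of $(\bx_0,\bx_{t^*})$, and that coupling is exactly what your disintegration needs. Similarly, ``joint law on $[0,t^*]$ matches the interpolant'' cannot hold for any Markov diffusion, since $\alpha_t\bx_0+\sigma_t\boldsymbol{\epsilon}$ is not Markov; only the pair $(\bx_0,\bx_{t^*})$ matters.

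The argument does go through for one choice, $w_t=2\lambda_t\sigma_t=2(\dot\sigma_t\sigma_t-\dot\alpha_t\sigma_t^2/\alpha_t)$, the KL-optimal coefficient of \Cref{cor:collapse}. Using $\mathbf{v}=\frac{\dot\alpha_t}{\alpha_t}\bx-\lambda_t\sigma_t\bs$, the forward drift $\mathbf{v}+\tfrac12 w_t\bs$ collapses to $\frac{\dot\alpha_t}{\alpha_t}\bx$. The forward process is then the linear SDE $\mathrm{d}\bx_t=\frac{\dot\alpha_t}{\alpha_t}\bx_t\,\mathrm{d}t+\sqrt{2\lambda_t\sigma_t}\,\mathrm{d}\mathbf{w}_t$. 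Its kernel from $\bx_0$ has mean $\alpha_t\bx_0$ and covariance $\sigma_t^2\mathbf{I}$, since $\sigma_t^2$ solves $\dot\Sigma=2\frac{\dot\alpha_t}{\alpha_t}\Sigma+2\lambda_t\sigma_t$ with $\Sigma_0=0$. This is exactly the interpolant kernel, and $w_t\ge 0$ holds exactly when $\sigma_t/\alpha_t$ is nondecreasing.

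With this $w_t$, your time-reversal and disintegration argument yields the bridge law $p(\bx_0\vert\bx_{t^*}=\mathbf{a})$, and your second step finishes the proof. For any other $w_t$ a residual term $(\tfrac12 w_t-\lambda_t\sigma_t)\bs$ stays in the forward drift, and the conclusion generally fails. The result should therefore be stated for this $w_t$ (the PnP-DM SDE), not an arbitrary one. The paper's proof asserts the step without this restriction, so it does not close the gap either.
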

See proof of~\Cref{thm:denoising} in~\Cref{app:proof_denoising}.
While PnP-DM~\cite{wu2024principled} shows a similar equivalence between its prior step and the diffusion's reverse-time SDE, our~\Cref{thm:denoising} generalizes to any Gaussian probability path defined by~\eqref{eq:si} under the SI framework.
In fact, we later show in~\Cref{sec:relation} that PnP-DM's prior step is a special case of our prior step.
The insights drawn from~\Cref{thm:denoising} directly lead to the timestep conversion, scaling, and reverse-SDE solving in Lines 4-7 of~\Cref{alg:flowsgs}, which we will show to be effective in our ablation studies.



\subsection{Relation to Existing PnP Methods}
\label{sec:relation}

While in this paper we assume a flow prior, another way of using the score--velocity conversion in~\eqref{eq:score-from-velocity} into the reverse-time SDE in~\eqref{eq:reverse-sde} is to replace the velocity field $\mathbf{v}(\cdot;\cdot)$ with the score function $\mathbf{s}(\cdot;\cdot)$, leading to a FlowSGS prior step using a diffusion model. 
We show in the following corollary that FlowSGS's diffusion-based prior step with a specific choice of the diffusion coefficient $w_t$ results in the prior step of PnP-DM~\cite{wu2024principled}. 
In other words, FlowSGS provides a general framework that can incorporate either a diffusion or flow prior, with arbitrary choices of using a score or velocity model, the interpolant process, and diffusion coefficients, and PnP-DM is a special case of FlowSGS under specific design choices. 

\begin{corollary}[PnP-DM as a special case of FlowSGS]
    Define $\lambda_t \triangleq -\gamma_t/\alpha_t 
    = (\alpha_t\dot\sigma_t - \dot\alpha_t\sigma_t)/\alpha_t$.
    At the sampling stage, FlowSGS's timestep conversion and reverse-time SDE~\eqref{eq:reverse-sde} reduce to PnP-DM (see (7) in~\cite{wu2024principled}) under specific conditions: the EDM diffusion coefficients 
    $\alpha_t=s_{\mathrm{E}}(t)$, $\sigma_t=s_{\mathrm{E}}(t)\sigma_{\mathrm{E}}(t)$, 
    the score--velocity conversion, 
    and the KL-optimal diffusion coefficient $w_t = 2\lambda_t \sigma_t$.
    \label{cor:collapse}
\end{corollary}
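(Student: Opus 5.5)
The plan is to substitute the stated design choices into FlowSGS's timestep rule (Line 4 of~\Cref{alg:flowsgs}) and into the reverse-time SDE~\eqref{eq:reverse-sde}, and then check term by term that the result coincides with the EDM-form reverse SDE used by PnP-DM (their (7)). I take that equation to be
\[
\mathrm{d}\mathbf{x}=\Big[\tfrac{\dot s_{\mathrm E}}{s_{\mathrm E}}\mathbf{x}-2s_{\mathrm E}^2\dot\sigma_{\mathrm E}\sigma_{\mathrm E}\nabla_{\mathbf{x}}\log p_t(\mathbf{x})\Big]\mathrm{d}t+s_{\mathrm E}\sqrt{2\dot\sigma_{\mathrm E}\sigma_{\mathrm E}}\,\mathrm{d}\bar{\mathbf{w}}_t ,
\]
started at noise level $\sigma_{\mathrm E}=\rho_k$.

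\emph{Step 1: timestep and initialization.} With $\alpha_t=s_{\mathrm E}(t)$ and $\sigma_t=s_{\mathrm E}(t)\sigma_{\mathrm E}(t)$ we get $\sigma_t/\alpha_t=\sigma_{\mathrm E}(t)$. Hence the condition $\sigma_{t^*}/\alpha_{t^*}=\rho_k$ of~\Cref{thm:denoising} is exactly PnP-DM's rule of starting at $\sigma_{\mathrm E}(t^*)=\rho_k$. The scaling $\mathbf{x}_{t^*}=s_{\mathrm E}(t^*)\mathbf{z}^{k+1}$ is also PnP-DM's initialization, and it reduces to $\mathbf{x}_{t^*}=\mathbf{z}^{k+1}$ in the usual case $s_{\mathrm E}\equiv 1$.

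\emph{Step 2: diffusion coefficient.} Expanding gives
\[
\lambda_t=\frac{\alpha_t\dot\sigma_t-\dot\alpha_t\sigma_t}{\alpha_t}=\frac{s_{\mathrm E}(\dot s_{\mathrm E}\sigma_{\mathrm E}+s_{\mathrm E}\dot\sigma_{\mathrm E})-\dot s_{\mathrm E}s_{\mathrm E}\sigma_{\mathrm E}}{s_{\mathrm E}}=s_{\mathrm E}\dot\sigma_{\mathrm E}.
\]
Therefore $w_t=2\lambda_t\sigma_t=2s_{\mathrm E}^2\dot\sigma_{\mathrm E}\sigma_{\mathrm E}$, and $\sqrt{w_t}=s_{\mathrm E}\sqrt{2\dot\sigma_{\mathrm E}\sigma_{\mathrm E}}$ matches PnP-DM's noise term.

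\emph{Step 3: drift.} I invert the score--velocity conversion~\eqref{eq:score-from-velocity} to express the velocity through the score:
\[
\mathbf{v}=\frac{\dot\alpha_t}{\alpha_t}\mathbf{x}+\frac{\sigma_t\gamma_t}{\alpha_t}\mathbf{s}=\frac{\dot s_{\mathrm E}}{s_{\mathrm E}}\mathbf{x}-\sigma_t\lambda_t\,\mathbf{s},
\]
using $\gamma_t/\alpha_t=-\lambda_t$. The drift of~\eqref{eq:reverse-sde} then becomes
\[
\mathbf{v}-\tfrac12 w_t\mathbf{s}=\frac{\dot s_{\mathrm E}}{s_{\mathrm E}}\mathbf{x}-(\sigma_t\lambda_t+\lambda_t\sigma_t)\mathbf{s}=\frac{\dot s_{\mathrm E}}{s_{\mathrm E}}\mathbf{x}-2s_{\mathrm E}^2\dot\sigma_{\mathrm E}\sigma_{\mathrm E}\,\nabla\log p_t(\mathbf{x}).
\]
This is the PnP-DM drift. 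The choice $w_t=2\lambda_t\sigma_t$ is what makes the two score terms add to exactly twice the probability-flow coefficient.

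\emph{Main obstacle: score conventions.} The delicate point is reconciling the score conventions. In EDM the score is written as $\nabla_{\mathbf{x}}\log p(\mathbf{x}/s;\sigma)$ for the unscaled marginal $p(\cdot;\sigma)$. Since $p_t(\mathbf{x})\propto p(\mathbf{x}/s_{\mathrm E};\sigma_{\mathrm E})$, differentiating with respect to $\mathbf{x}$ gives $\mathbf{s}(\mathbf{x},t)=\nabla_{\mathbf{x}}\log p(\mathbf{x}/s_{\mathrm E};\sigma_{\mathrm E})$. If instead the gradient is taken in the rescaled argument, an extra factor $1/s_{\mathrm E}$ appears. I will fix PnP-DM's convention explicitly and track this factor. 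I will also note that in the case $s_{\mathrm E}\equiv1$, $\sigma_{\mathrm E}(t)=t$ actually used by PnP-DM, all such factors disappear and the identification is immediate.
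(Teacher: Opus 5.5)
Your proposal is correct and follows the paper's proof essentially step for step. You use the same correspondence $s_{\mathrm E}=\alpha_t$, $\sigma_{\mathrm E}=\sigma_t/\alpha_t$ for the timestep and initialization, and the same identity $\lambda_t=s_{\mathrm E}\dot\sigma_{\mathrm E}$ (the paper's lemma $\dot\sigma_{\mathrm E}=\lambda_t/\alpha_t$). You also invert the score--velocity relation to $\mathbf{v}=\frac{\dot\alpha_t}{\alpha_t}\mathbf{x}-\lambda_t\sigma_t\mathbf{s}$ and choose $w_t=2\lambda_t\sigma_t$ so that the drift and noise terms match. Your score-convention concern resolves with no extra factor: EDM's $\nabla_{\mathbf{x}}\log p(\mathbf{x}/s_{\mathrm E};\sigma_{\mathrm E})$ is the gradient of the composite in $\mathbf{x}$, which is the convention under which EDM's ODE is correct, and it equals $\nabla_{\mathbf{x}}\log p_t(\mathbf{x})$ because $p_t(\mathbf{x})=s_{\mathrm E}^{-n}p(\mathbf{x}/s_{\mathrm E};\sigma_{\mathrm E})$; the paper makes this identification implicitly.
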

See proof of~\Cref{cor:collapse} in~\Cref{app:proof_equivalence}. 
In fact, such a connection is no surprise, since the stochastic interpolant framework also includes diffusion models in its mathematical formulations. Our contribution lies in specifying the exact form of this relationship.


FlowSGS's structural similarity to PnP-DM allows us to extend its convergence guarantee to FlowSGS under the SI parameterization, which we summarize in the following corollary.

\begin{corollary}[Convergence analysis]
    Suppose the interpolant schedule $(\alpha_t,\sigma_t)$ satisfies that 
    $\sigma_t/\alpha_t$ is strictly increasing on
    $(0,1]$, and the diffusion coefficient
    $w_t$ is bounded away from zero on the interval used by the
    prior step. 
    Then, under the $L^2$-accurate condition on the pretrained flow
    network,~\Cref{alg:flowsgs}
    with constant coupling $\rho_k\!\equiv\!\rho$ enjoys the same
    non-asymptotic convergence guarantee as PnP-DM~\cite[Theorem~3.1]{wu2024principled}: the
    average Fisher information between the non-stationary process driven by
    the velocity network and the stationary process driven by the true velocity field
    decays at a rate of $O(1/K)$ in the number of SGS iterations $K$, up to a
    floor set by the velocity network approximation error.
    \label{cor:convergence}
\end{corollary}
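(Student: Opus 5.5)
The plan is to reduce \Cref{alg:flowsgs} to PnP-DM by a change of variables, and then apply \cite[Theorem~3.1]{wu2024principled} essentially verbatim. That theorem is stated for a prior step driven by a learned score and an SDE that, for the true score, samples exactly from $\pi(\bx\vert\bz)$. Its ingredients are: (a) exactness of the idealized prior step, (b) a non-degenerate diffusion coefficient, so Girsanov's theorem and the Fisher-information (relative Fisher information) bookkeeping apply, and (c) an $L^2$ bound on the score error along the prior-step path. I would verify each ingredient for the SI parameterization in turn.

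For (a), the idealized prior step is exact by \Cref{thm:denoising}. Strict monotonicity of $\sigma_t/\alpha_t$ on $(0,1]$ makes the map $t\mapsto\sigma_t/\alpha_t$ invertible. Hence, for constant $\rho$, Line 4 gives a unique, iteration-independent $t^*$ with $\sigma_{t^*}/\alpha_{t^*}=\rho$, and $\alpha_{t^*}>0$. The true-velocity chain is then exactly the SGS Gibbs chain, whose stationary law has $\bx$-marginal $\pi^\rho(\bx)\propto p(\bx)\,(p(\by\vert\cdot)*\mathcal N(0,\rho^2\mathbf I))$, as in PnP-DM. I would also rescale time and space, $\tilde\bx_t=\bx_t/\alpha_t$ with noise level $\tilde\sigma=\sigma_t/\alpha_t$. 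This turns the SI path into the variance-exploding form used in PnP-DM's analysis, which (with \Cref{cor:collapse}) shows the two prior steps are driven by the same family of conditional laws.

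For (c), I would transfer the network error using \eqref{eq:score-from-velocity}. The induced score estimate is $\hat\bs=(\alpha_t\mathbf v_\theta-\dot\alpha_t\bx)/(\sigma_t\gamma_t)$, so
\[
\hat\bs-\bs=\frac{\alpha_t}{\sigma_t\gamma_t}(\mathbf v_\theta-\mathbf v).
\]
The prefactor is continuous and bounded on $[0,t^*]$ away from $t=0$. Near $t=0$ we have $\sigma_t\to0$, and here I would follow PnP-DM: either stop at a small $t_{\min}$ or absorb the singularity into the accuracy assumption. So an $L^2$-accurate velocity gives an $L^2$-accurate score with constant $C(t^*)$.

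For (b), the drift discrepancy in \eqref{eq:sde_v} is
\[
\Bigl(1-\frac{w_t\alpha_t}{2\sigma_t\gamma_t}\Bigr)(\mathbf v_\theta-\mathbf v).
\]
Girsanov then bounds the KL divergence between path measures by
\[
\int_0^{t^*}\frac{1}{2w_t}\Bigl|1-\frac{w_t\alpha_t}{2\sigma_t\gamma_t}\Bigr|^2\,\mathbb E\|\mathbf v_\theta-\mathbf v\|^2\,\mathrm dt,
\]
which is finite because $w_t$ is bounded below. By data processing this controls the KL divergence between the outputs of one prior step. Feeding this per-step bound into PnP-DM's telescoping argument gives the claim. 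That argument bounds the average relative Fisher information along the coupled likelihood/prior chain by $\mathrm{KL}(\text{init}\,\|\,\pi^\rho)/(K\cdot\text{const}) + \text{error floor}$, which yields the $O(1/K)$ rate plus a floor proportional to the velocity approximation error.

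The main obstacle is making ingredient (c) and the Girsanov constant rigorous near $t=0$, where $\sigma_t\gamma_t\to0$ makes the conversion factor blow up. I would first try to state the $L^2$-accuracy assumption directly in terms of the weighted velocity error appearing in the Girsanov integral. If that is too strong, I would use early stopping at $t_{\min}>0$ as in PnP-DM, with the added bias entering the floor.
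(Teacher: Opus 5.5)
Your reduction plan matches the paper's. Strict monotonicity of $\sigma_t/\alpha_t$ is the SI form of PnP-DM's Assumption A.1, and $\inf w_t$ plays the role of the floor $\delta^2$. The two prior-step SDEs share the diffusion $\sqrt{w_t}$ and differ only by the drift gap $\bigl(1-\frac{w_t\alpha_t}{2\sigma_t\gamma_t}\bigr)(\mathbf v_\theta-\mathbf v)$. These are exactly the checks the paper makes.

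The gap is in what you then do with that drift gap. Girsanov plus data processing gives a path-space KL bound. In effect it says the prior step increases the KL to the stationary law by at most $\frac12\int w_t^{-1}\,\mathbb E\|\Delta b_t\|^2\,\mathrm dt$. That bound has no dissipation term, so no Fisher information can come out of it, and Fisher information is what the conclusion is about.

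PnP-DM's telescoping argument also does not take per-step output KL errors as input. A one-step error between the learned and exact kernels, started from the same input law, could not be combined with the exact chain's contraction anyway, because KL has no triangle inequality. What the telescoping needs is a differential inequality along each segment. It comes from the Fokker--Planck equations of the non-stationary law $\nu_t$ and the stationary law $\pi_t$, two diffusions with the same coefficient $c(t)$:
\[
\frac{\mathrm d}{\mathrm dt}\mathrm{KL}(\nu_t\,\|\,\pi_t)\le-\frac{c(t)^2}{4}\,\mathrm{FI}(\nu_t\,\|\,\pi_t)+\frac{1}{c(t)^2}\,\mathbb E_{\nu_t}\|\Delta b_t\|^2 .
\]
This is PnP-DM's Lemma~A.4. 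With $\Delta b_t=0$ it reduces to the identity used on the Brownian-bridge likelihood segments. The paper invokes it with $c(t)^2=w_t$ and exactly your drift gap, then integrates and telescopes over all segments.

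One consequence is that the floor $w_t\ge\delta^2$ is needed to lower-bound the dissipation on the prior segments, not merely to make an error integral finite. That is where the $\min(\rho,\delta)^2$ in the $O(1/K)$ term comes from. With your Girsanov step, dissipation would come only from the likelihood segments. At best you would get a Fisher-information average over those segments, which is not the full-trajectory guarantee of PnP-DM's Theorem~3.1.

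If you replace step (b) by the lemma above, the rest of your argument goes through. Step (c) then becomes unnecessary, because the network error enters directly through the velocity drift gap.

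Your concern about $t\to0$ is well placed, and the paper resolves it as you propose: it truncates at $t=\epsilon$ with $\delta^2=w_\epsilon$. The singularity is not confined to the score conversion, though. When $w_t$ is bounded below, your own drift-gap prefactor $1-\frac{w_t\alpha_t}{2\sigma_t\gamma_t}$ grows like $1/\sigma_t$. So under a plain $L^2$ velocity-accuracy assumption, the truncation is necessary rather than optional.
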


See detailed proof of~\Cref{cor:convergence} in~\Cref{app:convergence}.
Compared to PnP-DM, the assumptions become more flexible: PnP-DM inherits its diffusion-coefficient floor
from the EDM forward process (which can vanish at the endpoints for iDDPM/EDM, requiring the workaround in~\cite[Appendix~A.3]{wu2024principled}),
whereas in FlowSGS, the diffusion coefficient $w_t$ is a free
sampling-time choice, so the floor condition is a constraint on the
user's choice rather than on the model. 
The monotonicity condition on
$\sigma_t/\alpha_t$ is the SI analog of PnP-DM's Assumption~A.1 and is
satisfied by all standard schedules we consider.

Note that the KL-optimal coefficient is not regular at the origin (\Cref{tab:coefficients}): $w^{\mathrm{KL}}_t = 2t/(1-t)$ for the linear interpolant and $w^{\mathrm{KL}}_t = \pi\tan(\pi t/2)$ for GVP, both of which approach 0 as $t \to 0$.
Following standard practice for diffusion-based samplers~\cite{zheng2025inversebench}, we avoid this irregularity by truncating the interpolant at $t \in [\epsilon, 1-\epsilon]$ with $\epsilon=1\times10^{-5}$ and terminating the reverse SDE at $t = \epsilon$ rather than $t = 0$, returning $x^{k+1} = x_\epsilon$. 
The prior step therefore runs on $[\epsilon, t_k]$ with $t_k = (\sigma_t/\alpha_t)^{-1}(\rho_k)$, and $\inf_{[\epsilon, t_k]} w_t = w_\epsilon > 0$ for every stochastic schedule in~\Cref{tab:coefficients}, so~\Cref{cor:convergence} applies with $\delta^2 = w_\epsilon$. 
Another solution for such irregularity is to add regularization~\cite{ma2024sit}, which we leave for future work.

Finally,~\Cref{cor:convergence} is stated for a constant coupling $\rho_k \equiv \rho$, whereas FlowSGS anneals $\rho_k = \max(\rho_0\alpha^k, \rho_{\min})$ (\Cref{tab:annealing}). 
The guarantee thus describes the fixed-$\rho$ chain that the annealed schedule approaches
once $\rho_k$ saturates at $\rho_{\min}$. 
The early, large-$\rho$ iterations act as an initialization heuristic that we find accelerates convergence in practice but which the analysis does not cover.

\vspace{-2mm}

\begin{figure*}[t]
    \centering
    \includegraphics[width=\linewidth]{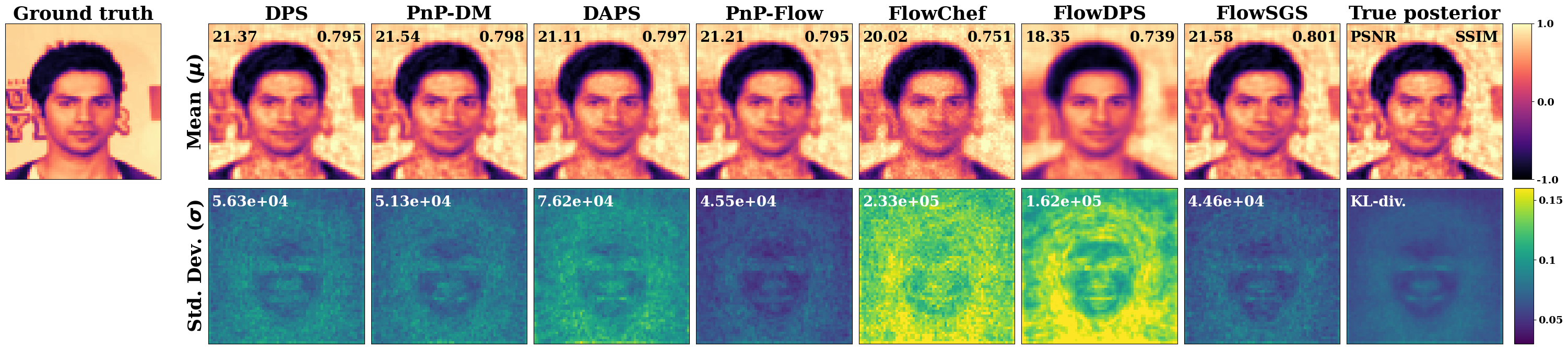}
    \caption{
        \textbf{Results on a toy compressed-sensing problem with analytically available Gaussian posterior.}
        We show the mean image (row 1) and pixel-wise standard deviations (row 2) for the Gaussian posteriors.
        Our method estimates an accurate posterior with the best mean image quality and the lowest KL divergence compared to the true posterior.
    }
    \label{fig:toy_problem}
\end{figure*}

\vspace{-1.3mm}
\section{Experiments}
\label{sec:exp}

\vspace{-1.3mm}
\subsection{Baselines}
\label{Sec:baselines}

We compare our method with three diffusion-based samplers: DPS~\cite{chung2023diffusion}, PnP-DM~\cite{wu2024principled}, DAPS~\cite{zhang2025improving}, as well as three flow-based samplers: PnP-Flow~\cite{martin2025pnpflow}, FlowChef~\cite{patel2025flowchef}, and FlowDPS~\cite{kim2025flowdps}.
We pretrain diffusion/flow models using the same NCSN++~\cite{song2021scorebased} architecture for fair comparisons (see details in~\Cref{app:network}).
All flow-based methods use the linear interpolant schedule~\cite{liu2023flow} by default.
For each task, we tune the hyperparameters of all methods for the best performance.

\begin{figure*}[t]
    \centering
    \includegraphics[width=\linewidth]{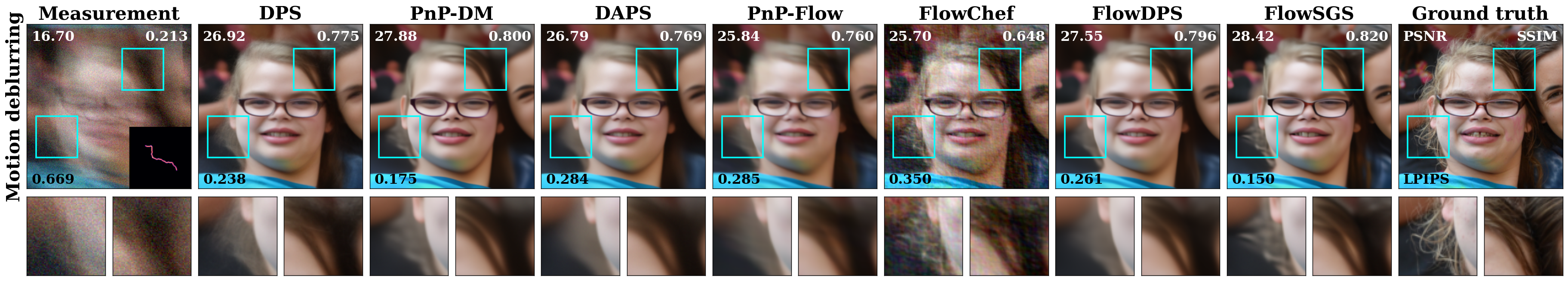}
    \includegraphics[width=\linewidth]{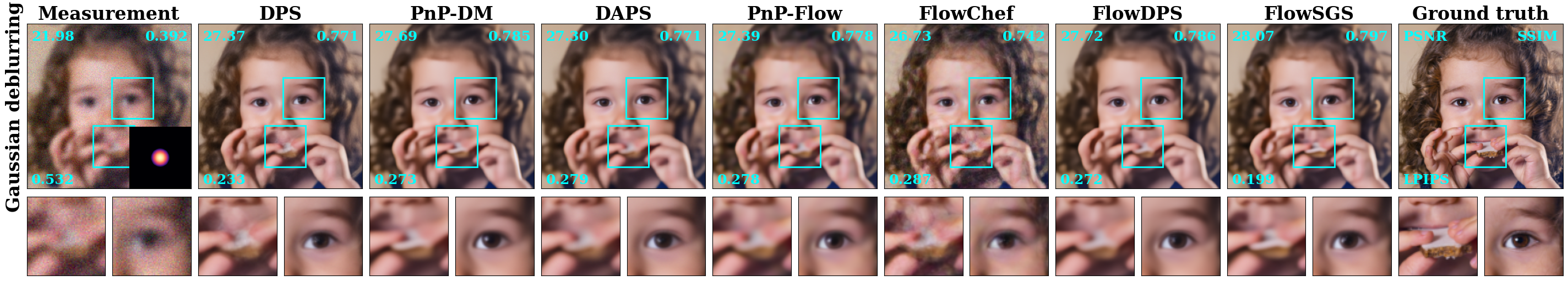}
    \includegraphics[width=\linewidth]{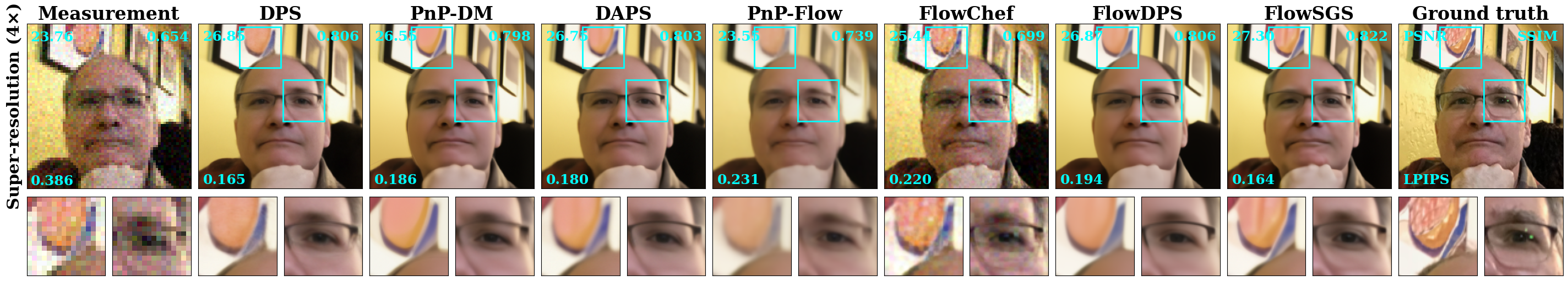}
    \includegraphics[width=\linewidth]{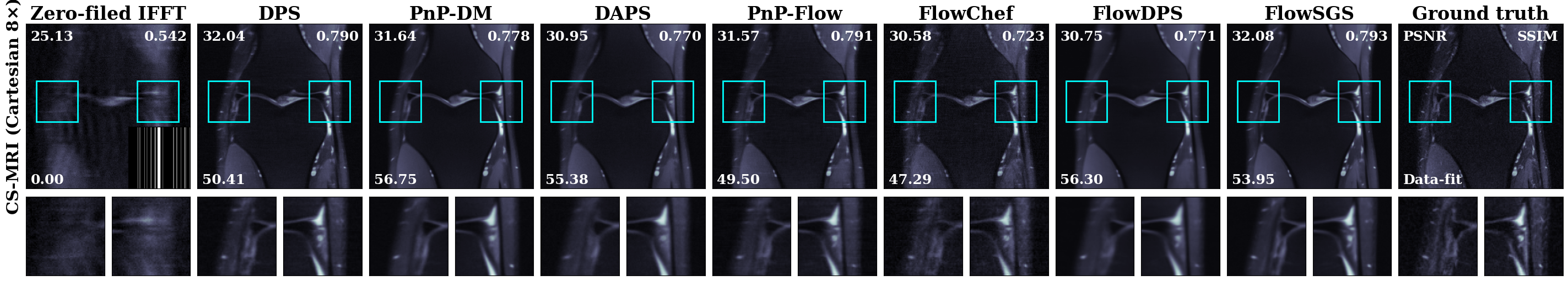}
    \includegraphics[width=\linewidth]{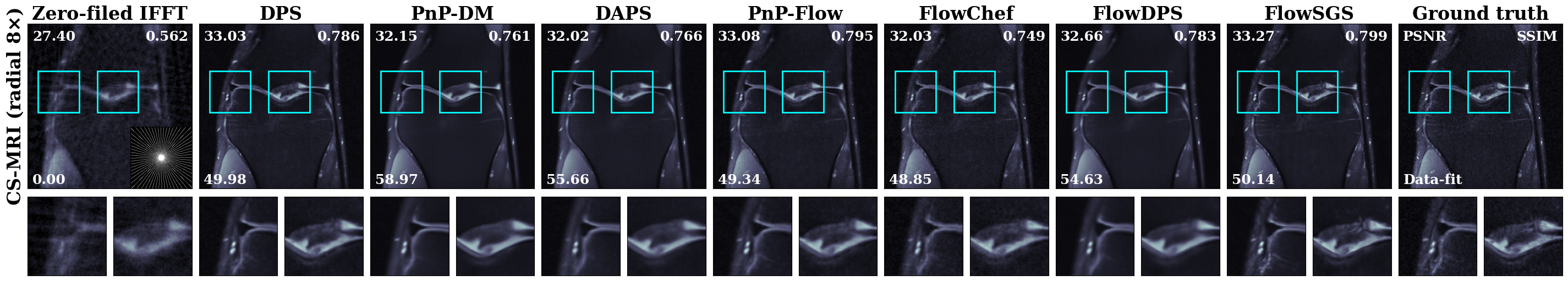}
    \caption{
        \textbf{Visual results on linear inverse problems.}
        We draw 8 posterior samples for each method and report the mean image.
    }
    \label{fig:linear_pixel}
\end{figure*}

\vspace{-1.3mm}
\subsection{Toy problem}
\label{sec:toy}

To validate that FlowSGS can accurately sample from the Bayesian posterior, we create a toy compressed sensing problem with a Gaussian likelihood and a Gaussian prior, such that the posterior is an analytically available Gaussian distribution. 
See problem setups in~\Cref{app:toy}.
We draw 128 posterior samples and compute the mean and pixel-wise standard deviation of the Gaussian posterior, and compute the PSNR and SSIM of the mean image with respect to the ground truth image, the KL-divergence of the Gaussian posterior with respect to the true posterior, shown in~\Cref{fig:toy_problem}.
We show that FlowSGS and PnP-DM (both being \textit{sampling} methods) produce the most accurate posterior in terms of both mean image quality and KL-divergence.
Note that PnP-Flow, although not claimed as a posterior sampler~\cite{martin2025pnpflow}, also produces an accurate posterior.
In contrast, DPS, FlowChef, and FlowDPS significantly overestimate the uncertainty.

\vspace{-1.3mm}
\subsection{Linear inverse problems}
\label{sec:results_linear}

We benchmark the baselines and FlowSGS (with different interpolant schedules) over a variety of linear inverse problems (motion/Gaussian deblurring, super-resolution, and compressed sensing MRI) using the FFHQ dataset~\cite{karras2019style} and the fastMRI knee dataset~\cite{zbontar2018fastmri} resized to 256$\times$256.
See problem setups in~\Cref{app:exp_pixel}.
We pretrain diffusion/flow models on these datasets. 
In~\Cref{tab:pixel_results,tab:pixel_results_full}, we report metrics averaged over 100 test samples not used during training.
We draw 8 samples for each method and report the mean image. 
Visual samples are shown in~\Cref{fig:linear_pixel}.
We show that FlowSGS is outperforming the baselines across a variety of inverse problems.
Among the interpolant schedules, the linear schedule provides the best overall results, while VP-SDE provides the worst results.
This aligns with the fact that the linear and GVP schedule used in flow matching has straighter probability paths than the VP-SDE used in diffusion models, leading to more accurate sampling.
See more results in~\Cref{app:main}.

\begin{table}
    \footnotesize
    \setlength{\tabcolsep}{3pt} 
    \centering
    \caption{
        \textbf{Quantitative results averaged over 100 test images.}
        We draw 8 posterior samples for each method and report the mean image for linear inverse problems. For phase retrieval, we use the best among 8 posterior samples rather than the mean. See errors and runtimes in~\Cref{tab:pixel_results_full,tab:time}.
    } 
    \label{tab:pixel_results}

    \begin{tabular}{lccccccccc}
    \toprule
    \multirow{2}{*}{\bf Method} & \multicolumn{3}{c}{\bf Motion deblurring} & \multicolumn{3}{c}{\bf Gaussian deblurring} & \multicolumn{3}{c}{\bf Super-resolution ($4\times$)} \\ 
    \cmidrule(lr){2-4} \cmidrule(lr){5-7} \cmidrule(lr){8-10} 
    & PSNR ($\uparrow$) & SSIM ($\uparrow$) & LPIPS ($\downarrow$) & PSNR ($\uparrow$) & SSIM ($\uparrow$) & LPIPS ($\downarrow$) & PSNR ($\uparrow$) & SSIM ($\uparrow$) & LPIPS ($\downarrow$) \\ \midrule
    DPS~\cite{chung2023diffusion}      & 28.14 & 0.803 & 0.198 & 29.55 & 0.829 & 0.167 & 28.97 & 0.822 & 0.155 \\
    PnP-DM~\cite{wu2024principled}     & 28.53 & 0.819 & 0.220 & 29.44 & 0.836 & 0.197 & 28.71 & 0.823 & 0.179 \\
    DAPS~\cite{zhang2025improving}     & 28.00 & 0.801 & 0.230 & 29.48 & 0.829 & 0.199 & 28.99 & 0.824 & 0.172 \\
    PnP-Flow~\cite{martin2025pnpflow}  & 27.15 & 0.787 & 0.254 & 29.42 & 0.832 & 0.208 & 26.35 & 0.776 & 0.221 \\
    FlowChef~\cite{patel2025flowchef}  & 25.26 & 0.587 & 0.388 & 28.66 & 0.800 & 0.209 & 27.14 & 0.711 & 0.234 \\
    FlowDPS~\cite{kim2025flowdps}      & 28.42 & 0.816 & 0.229 & 29.78 & 0.840 & 0.202 & 28.62 & 0.819 & 0.192 \\ \midrule 
    FlowSGS (VP)     & 27.69 & 0.709 & 0.209 & 28.65 & 0.723 & 0.190 & 27.35 & 0.708 & 0.206 \\
    FlowSGS (GVP)    & \underline{29.16} & \underline{0.825} & \underline{0.162} & \underline{29.97} & \underline{0.837} & \underline{0.147} & \underline{29.02} & \underline{0.824} & \underline{0.160} \\
    FlowSGS (Linear) & \textbf{29.22} & \textbf{0.836} & \textbf{0.154} & \textbf{30.09} & \textbf{0.847} & \textbf{0.141} & \textbf{29.49} & \textbf{0.844} & \textbf{0.149} \\
    \bottomrule
    \end{tabular}

    \medskip 
 
    \begin{tabular}{lccccccccc}
    \toprule
    \multirow{2}{*}{\bf Method} & \multicolumn{3}{c}{\bf CS-MRI (Cartesian, $8\times$)} & \multicolumn{3}{c}{\bf CS-MRI (radial, $8\times$)} & \multicolumn{3}{c}{\bf Fourier phase retrieval} \\ 
    \cmidrule(lr){2-4} \cmidrule(lr){5-7} \cmidrule(lr){8-10} 
    & PSNR ($\uparrow$) & SSIM ($\uparrow$) & Data-fit ($\downarrow$) & PSNR ($\uparrow$) & SSIM ($\uparrow$) & Data-fit ($\downarrow$) & PSNR ($\uparrow$) & SSIM ($\uparrow$) & LPIPS ($\downarrow$) \\ \midrule
    DPS~\cite{chung2023diffusion}      & 31.93 & 0.819 & \underline{48.189} & 34.57 & 0.858 & 50.347 & 18.17 & 0.513 & 0.413 \\
    PnP-DM~\cite{wu2024principled}     & 32.27 & 0.824 & 56.407 & 33.36 & 0.837 & 58.001 & 35.48 & 0.931 & 0.084 \\
    DAPS~\cite{zhang2025improving}     & 31.49 & 0.815 & 54.992 & 33.28 & 0.838 & 55.169 & 35.77 & 0.926 & \textbf{0.054} \\
    PnP-Flow~\cite{martin2025pnpflow}  & 32.22 & \underline{0.837} & 50.053 & 34.60 & 0.864 & 49.892 & 30.76 & 0.844 & 0.128 \\
    FlowChef~\cite{patel2025flowchef}  & 30.76 & 0.766 &  \textbf{47.317} & 33.32 & 0.817 & \textbf{48.894} & 20.80 & 0.501 & 0.380 \\
    FlowDPS~\cite{kim2025flowdps}      & 30.73 & 0.811 & 56.048 & 34.05 & 0.854 & 54.165 & 27.99 & 0.818 & 0.187 \\ \midrule 
    FlowSGS (VP)    & 30.36 & 0.690 & 60.096 & 34.65 & 0.861 & \underline{49.437} & 31.09 & 0.850 & 0.132 \\
    FlowSGS (GVP)  & \underline{32.58} & 0.825 & 54.592 & \textbf{34.92} & \textbf{0.870} & 50.577  & \textbf{38.45} & \textbf{0.950} & \underline{0.056} \\
    FlowSGS (Linear) & \textbf{32.84} & \textbf{0.839} & 53.764 & \underline{34.77} & \underline{0.869} & 50.420 & \underline{37.52} & \underline{0.940} & 0.076  \\
    \bottomrule
    \end{tabular}
\end{table}

\begin{figure*}[t]
    \centering
    \includegraphics[width=0.95\linewidth]{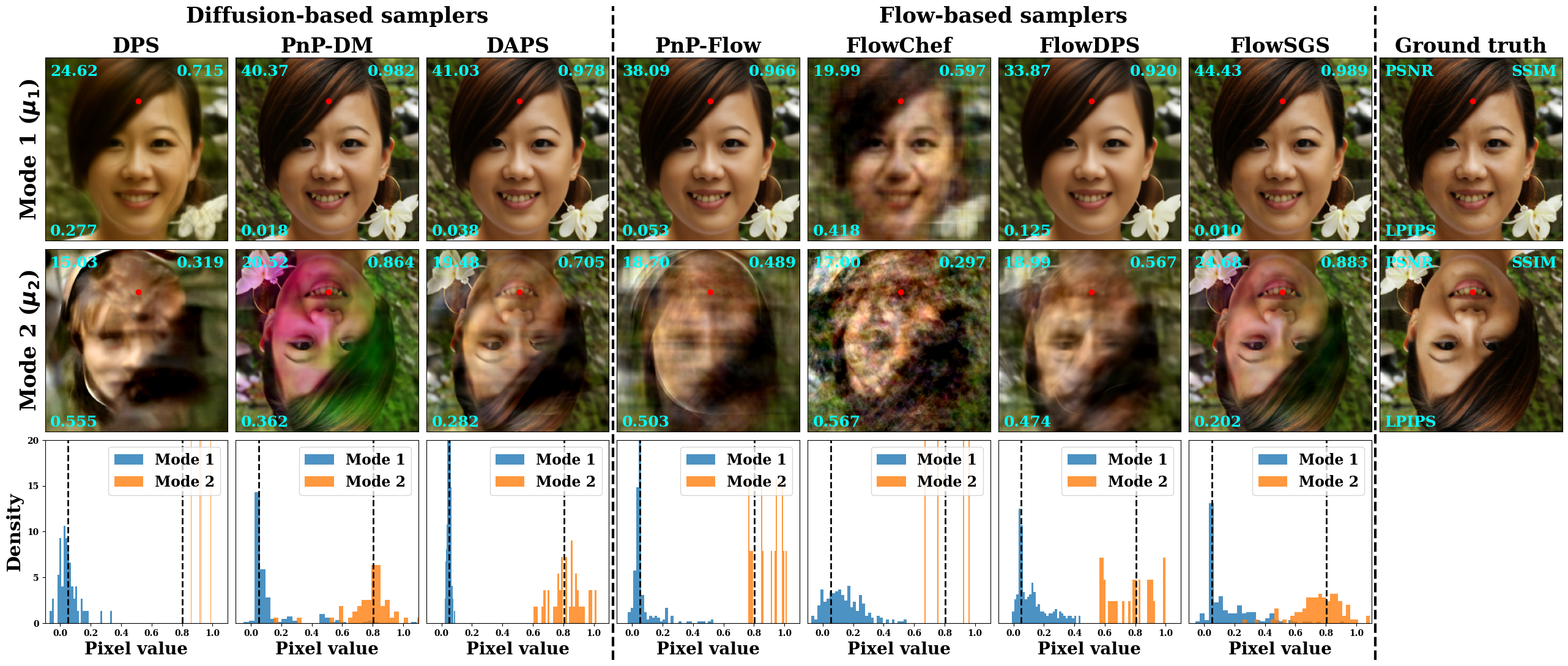}
    \caption{
        \textbf{Bimodal posterior for nonlinear Fourier phase retrieval.}
        With each method, we draw 512 samples and classify them into two modes.
        We compute the mean image using the top 25\% images for each mode, and show the histograms for a pixel in the red channel (marked by red dots).
    }
    \label{fig:fpr}
\end{figure*}

\vspace{-1.3mm}
\paragraph{Using latent space models}
We note that, in addition to nonlinear inverse problems, the generality of our method allows latent-space flow priors, which we consider briefly here. \Cref{fig:latent-space} shows example comparisons on the motion deblurring and super-resolution problems to PnP-Flow~\cite{martin2025pnpflow}, FlowChef~\cite{patel2025flowchef}, and FlowDPS~\cite{kim2025flowdps}, using the off-the-shelf Stable Diffusion 3.5~\cite{esser2024scaling} medium model. See~\Cref{app:exp_latent} for experimental details. Our method is slower due to repeatedly taking the gradient through the decoder in the Langevin steps (see~\Cref{app:lgvd}). This suggests future work using a hybrid pixel/latent space Langevin sampling scheme (running in pixel space initially and then moving to latent space in later iterations), as mentioned as a possibility in~\cite{zhang2025improving}.


\vspace{-1.3mm}
\subsection{Nonlinear Fourier phase retrieval}
\label{sec:fpr}
Fourier phase retrieval (FPR)~\cite{fienup1986phase,fienup2012understanding,shechtman2015phase} is an ill-posed inverse problem that widely exists in real-world imaging systems (e.g., microscopy~\cite{zheng2013wide,zheng2021concept}, black hole imaging with interferometry~\cite{bouman2016computational,sun2021deep,wu2024principled,feng2024event,akiyama2019first}), where a bimodal posterior could exist due to the loss of Fourier phase information.
For the first time among flow-based samplers, we provide a benchmark on nonlinear FPR (see~\Cref{tab:pixel_results} and~\Cref{fig:fpr}) also using the FFHQ dataset and pretrained pixel-space models in~\Cref{sec:results_linear}.
Since FPR is very ill-posed and relies on good initialization, we characterize image reconstruction quality by reporting the single best sample among 8 posterior samples we draw, similar to prior work~\cite{chung2023diffusion,wu2024principled} (see visual examples in~\Cref{fig:fpr_app}).
For uncertainty quantification, we follow~\cite{li2026shuffleflow} and draw 512 samples from each method and classify the samples into modes using the PSNR with respect to each mode (samples with PSNR$<$15 dB for both modes are discarded).
Using the best 25\% of the samples, we show the mean image of each mode in~\Cref{fig:fpr}a, and the histogram of a pixel in the red channel in~\Cref{fig:fpr}b.
While most methods reconstruct a bimodal posterior, we point out that our method reconstructs the best mean image for both modes (see metrics).
FlowSGS, PnP-DM, and DAPS are sampling methods that produce clear bimodal distributions centered at the true pixel value (vertical dashed lines in the histogram), while DPS, FlowChef, and FlowDPS, which rely on guidance or trajectory-steering rather than direct posterior sampling, recover corrupted rotated modes. 
This demonstrates a strong advantage of principled posterior sampling over approximate methods in capturing multi-modal posteriors.

\begin{figure}
    \centering
    \includegraphics[width=0.465\linewidth]{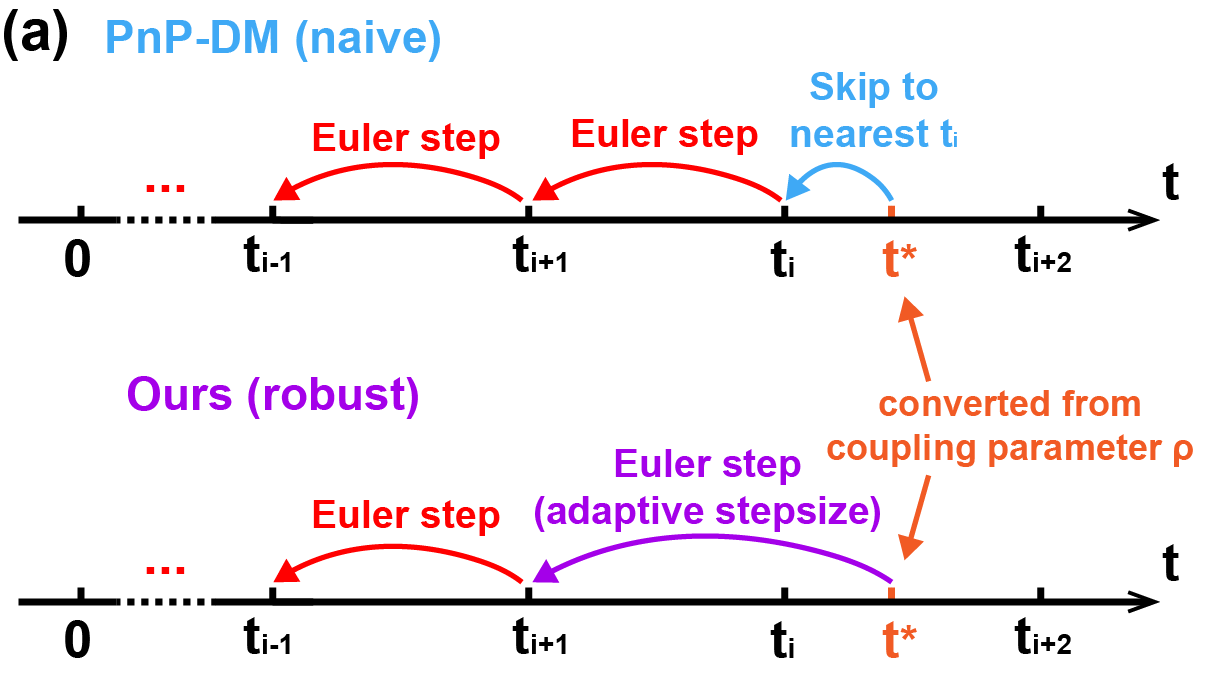}
    \hfill
    \includegraphics[width=0.235\linewidth]{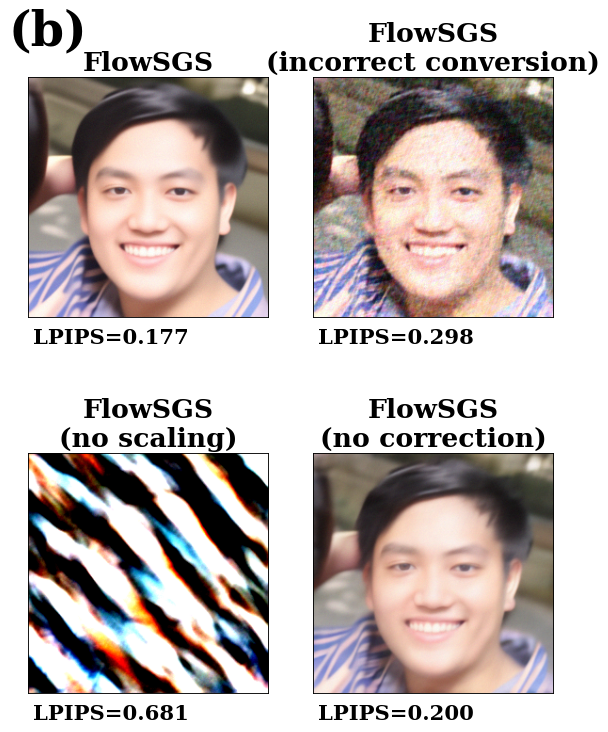}
    \hfill
    \includegraphics[width=0.28\linewidth]{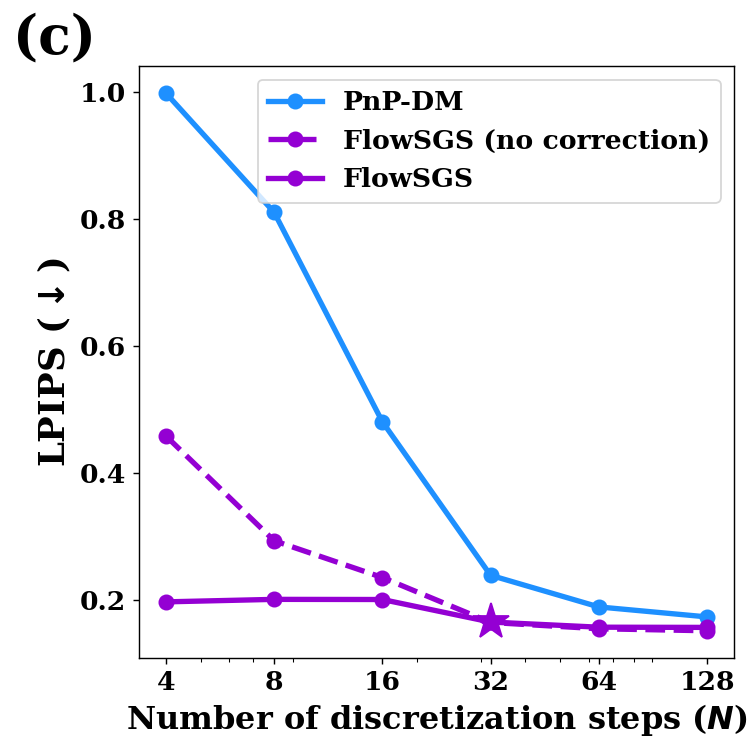}
    \caption{
        \textbf{Timestep correction in prior step.}
        (a) Illustration of FlowSGS's robust timestep correction to reduce discretization error.
        (b) Visual samples of FlowSGS ablations on motion deblurring.
        (c) Comparison of FlowSGS (with and without timestep correction) and PnP-DM~\cite{wu2024principled} with different numbers of discretization steps on motion deblurring.
    }
    \label{fig:timestep_conversion}
\end{figure}

\vspace{-1.3mm}
\subsection{Ablation studies}
\label{sec:ablation}
 We perform ablation studies on the key design choices in our flow-based prior step (Lines 4-7 in~\Cref{alg:flowsgs}) as well as the interpolant schedules $(\alpha_t, \sigma_t)$ and diffusion coefficients $w_t$.

\vspace{-1.3mm}
\paragraph{Timestep conversion, scaling, and timestep correction}

In~\Cref{fig:timestep_conversion}a, we illustrate the robust timestep correction strategy we designed to reduce discretization error in the prior step SDE solver (Line 6 in~\Cref{alg:flowsgs}).
Specifically, instead of starting at the closest discrete timestep to $t_k$ as in PnP-DM~\cite{wu2024principled}, we use an adaptive stepsize in the first Euler step and start from the correct time $t_k$.
In~\Cref{fig:timestep_conversion}b and~\Cref{tab:ablation}, we show FlowSGS results on motion deblurring with missing components in~\Cref{alg:flowsgs}: correct timestep conversion (Line 4), scaling (Line 5), time step correction (Line 6), and verify that they collectively contribute to our method.
In~\Cref{fig:timestep_conversion}c, we observe that as the discretization step decreases, PnP-DM (blue) is significantly affected by discretization error, while our timestep correction allows FlowSGS to maintain a more stable performance (purple).
This aligns with the theory that the linear interpolant has straighter probability paths and is more robust to discretization errors~\cite{lee2023minimizing,liu2023flow,ma2024sit} than diffusion models.
See visual samples in~\Cref{fig:ablation_steps}.
As a result, we use a 32-step discretization (marked by a star) in the prior step, while PnP-DM needs 100 steps.

\vspace{-1.3mm}
\paragraph{Interpolant schedules and diffusion coefficients}
In~\Cref{tab:ablation_interp}, we compare FlowSGS results generated using three schedules (VP-SDE, linear, and GVP) and four diffusion coefficients (see summary in~\Cref{app:interpolant}) on motion deblurring and Fourier phase retrieval.
We find that the linear interpolant combined with the KL-optimal diffusion coefficient $w_{\mathrm{KL}}$ gives the best results.
This is in accordance with the fact that $w_{\mathrm{KL}}$ minimizes the upper bound on the KL divergence between the true and learned data distribution~\cite{ma2024sit,albergo2025stochastic}, leading to more accurate sampling.
Despite $w_{\mathrm{KL}}$'s empirical success, it is worth mentioning that $w_{\mathrm{KL}}$ is not regular at $t=0$ and $t=1$~\cite{ma2024sit}, therefore in practice we solve the SDE in $[\epsilon,1-\epsilon]$ with $\epsilon=1\times10^{-5}$.

\vspace{-1.3mm}
\section{Conclusion}
\label{sec:conclusion}

We present modifications to SGS that accommodate SI-based generative models, including potential future approaches.
In doing so, we improve existing flow-based methods with better integration of measurement likelihood and more efficient prior-step sampling than similar diffusion-based methods. 
We characterize our relationship to a previous method in both theory (comparing FlowSGS to PnP-DM in~\Cref{cor:collapse,cor:convergence}) and empirical results. 
Despite our SOTA performance, we note current limitations: sampling cost is elevated compared to other flow methods, though this could be improved with one-step flow models, e.g., Mean Flow~\cite{geng2025mean} and Meta Flow Maps~\cite{potaptchik2026meta}. We currently only consider non-blind inverse problems, and could extend into semi-blind or blind settings by incorporating our flow-based prior step into EM frameworks, e.g., PRISM~\cite{hu2026prism}. 
Finally,
Langevin dynamics are not efficient with latent space models, which suggests potential for hybrid sampling. 


\begin{ack}
We gratefully acknowledge the support of the NSF-Simons AI Institute for the Sky (SkAI) via grants NSF AST-2421845 and Simons Foundation MPS-AI-00010513.
This material is based upon work supported by the U.S. National Science Foundation under Award No. 2542022.
The authors would like to thank Yu Sun, Deliang Wei, Rebecca Willett, and Tjitske Starkenburg for helpful discussions, and three anonymous reviewers for insightful feedback.
\end{ack}

\medskip
{
    \small
    \bibliographystyle{plain}
    \bibliography{reference.bib}
}


\newpage
\appendix

\section{Theory}
\label{app:theory}

\subsection{Interpolant schedules and diffusion coefficients}
\label{app:interpolant}

In~\Cref{tab:interpolant_schedules}, we summarize three common interpolant schedules with which we experimented FlowSGS, as well as their score--velocity conversion~\eqref{eq:score-from-velocity} functions and time inversion functions.
In~\Cref{tab:coefficients}, we summarize common choices of diffusion coefficients $w_t$.
Note that $w_t^{\mathrm{KL}}$ is not regular at $t=1$ for Linear and GVP interpolants because $\alpha_t \to 0$; see~\cite{ma2024sit} for details.

\begin{table*}[h]
    \small
    \setlength{\tabcolsep}{4pt}
    \renewcommand{\arraystretch}{1.8}
    \centering
    \caption{
        {\bf Summary of interpolant schedules, score--velocity conversion ~\eqref{eq:score-from-velocity}, and time inversion.}
    }
    \label{tab:interpolant_schedules}
    \begin{tabular}{lccc}
    \toprule
    \textbf{Property} & \textbf{VP-SDE} & \textbf{Linear} & \textbf{GVP} \\ 
    \midrule
    \textbf{Signal scale}
    & $\alpha_t = e^{-\frac{1}{2} \int_0^t \beta_s \mathrm{d}s}$ 
    & $\alpha_t = 1 - t$ 
    & $\alpha_t = \cos\!\left(\frac{\pi}{2}t\right)$ 
    \\
    \textbf{Noise scale}
    & $\sigma_t = \sqrt{1 - \alpha_t^2}$ 
    & $\sigma_t = t$ 
    & $\sigma_t = \sin\!\left(\frac{\pi}{2}t\right)$ 
    \\
    \textbf{Score from velocity}
    & $\mathbf{s}=-\frac{2}{\beta_t} \mathbf{v} -\bx_t$
    & $\mathbf{s} = -\dfrac{(1-t)\mathbf{v}+\mathbf{x}_t}{t}$
    & $\mathbf{s}=-\frac{2}{\pi}\cot(\frac{\pi t}{2})\mathbf{v} -\bx_t$
    \\
    \textbf{Time inversion}
    & $t = B^{-1}\!\left(\log(1+\rho^2)\right)^*$
    & $t=\dfrac{\rho}{1+\rho}$
    & $t=\dfrac{2}{\pi}\arctan(\rho)$
    \\
    \textbf{Representative models} 
    & \makecell[l]{DDPM~\cite{ho2020denoising}, \\
    Score-SDE~\cite{song2021scorebased}} 
    & \makecell[l]{Flow Matching~\cite{lipman2023flow}, \\
    Rectified Flow~\cite{liu2023flow}}
    & \makecell{SiT~\cite{ma2024sit}} 
    \\
    \bottomrule
    \end{tabular}
    
    \vspace{0.5em}
    \begin{minipage}{0.94\linewidth}
    \footnotesize
    $^*$For VP-SDE, $\beta_t>0$ is the variance-preserving noise-rate schedule and
    $B(t)=\int_0^t \beta_s\,ds$ is its cumulative noise level. Therefore
    $B(t)=\log(1+\rho^2)$.
    \end{minipage}
\end{table*}

\begin{table}[htbp]
    \small
    \setlength{\tabcolsep}{5pt}
    \centering
        \caption{%
        \textbf{Diffusion coefficients $w_t$ for the reverse-time SDE},
        all chosen \emph{post-training} without affecting $\mathbf{v}_\theta$.
        ``Regular'' means finite and non-negative at the endpoint.
    }
    \label{tab:coefficients}
    \begin{tabular}{lcccc}
    \toprule
    & \textbf{ODE} & \textbf{Sigma} & \textbf{Sine} & \textbf{KL-optimal} \\
    \midrule
    \textbf{Expression}
        & $0$
        & $\sigma_t$
        & $\sin^2(\pi t)$
        & $2\!\left(\dot{\sigma}_t\sigma_t - \dfrac{\dot{\alpha}_t\sigma_t^2}{\alpha_t}\right)$ \\[6pt]
    \textbf{Regular at $t=0$} & \checkmark & \checkmark & \checkmark & \checkmark \\
    \textbf{Regular at $t=1$} & \checkmark & \checkmark & \checkmark & $\times$ \\
    \bottomrule
    \end{tabular}
\end{table}

\subsection{Proof of~\Cref{thm:denoising}}
\label{app:proof_denoising}

\begin{proof}[Proof of~\Cref{thm:denoising}]
The SGS prior step targets
\begin{equation}
    \pi(\bx_0\vert\bz)
    \propto
    \exp[-g(\bx_0)]
    \exp\!\left(
        -\frac{\|\bx_0-\bz\|^2}{2\rho_k^2}
    \right),
\end{equation}
which has the form of a Bayesian denoising posterior with prior
$p(\bx_0)\propto \exp[-g(\bx_0)]$ and Gaussian likelihood $\pi(\bz\vert\bx_0) = \mathcal{N}(\bz;\bx_0,\rho_k^2\mathbf{I})$.
This implies that $\bz\vert\bx_0 \sim \mathcal{N}(\bz;\bx_0,\rho_k^2\mathbf{I})$.

Suppose $t^*$ satisfies
$\sigma_{t^*}/\alpha_{t^*}=\rho_k$ and $\alpha_{t^*}>0$.
Solving the reverse-time SDE in~\eqref{eq:reverse-sde} from $t^*$ to $0$ follows the conditional probability
\begin{equation}
    \begin{aligned}
        \pi(\bx_0\vert\bx_{t^*}) &\propto \pi(\bx_{t^*}\vert\bx_0)p(\bx_0)  \\
        &= \mathcal{N}(\bx_{t^*};\alpha_{t^*}\bx_0,\sigma_{t^*}^2\mathbf{I})p(\bx_0) \\
        & \propto \exp\!\left(-\frac{ \|\bx_{t^*}-\alpha_{t^*}\bx_0\|^2 }{2\sigma_{t^*}^2}\right) \exp[-g(\bx_0)] \\
        &= \exp\!\left(-\frac{ \|\bx_{t^*}/\alpha_{t^*}-\bx_0\|^2 }{2(\sigma_{t^*}/\alpha_{t^*})^2}\right) \exp[-g(\bx_0)] \\
        &=  \exp\!\left(-\frac{ \|\bx_{t^*}/\alpha_{t^*}-\bx_0\|^2 }{2\rho_k^2}\right) \exp[-g(\bx_0)],
    \end{aligned}
    \label{eq:denoise_proof}
\end{equation}
where the first line uses Bayes' rule, and the third line uses the definition of the Gaussian probability path in~\eqref{eq:si}.
\eqref{eq:denoise_proof} indicates that the conditional probability $\pi(\bx_0\vert\bx_{t^*})$ is equal to the prior step $\pi(\bx_0\vert\bz)$ when
$\bx_{t^*}=\alpha_{t^*}\bz$.
Therefore, running the SI's reverse-time SDE with initialization at $t^*$ and $\bx_{t^*}=\alpha_{t^*}\bz$, the distribution we get at $\bx_0$ is exact the distribution of the SGS prior step $\pi(\bx_0\vert\bz)$.
This provides the foundation of our prior step in Lines 4-7 in~\Cref{alg:flowsgs}, where the time conversion (Line 4) follows 
\begin{equation}
    t^*(\rho)=\left(\frac{\sigma_{t^*}}{\alpha_{t^*}}\right)^{-1}(\rho),
\end{equation}
the scaling (Line 5) of the output of the likelihood step $\bz$ follows
\begin{equation}
    \bx_{t^*}=\alpha_{t^*}\bz.
\end{equation}
These two steps are essential preparation steps in service for the prior step for exact noise level matching.
Note that with common choices of interpolants, $\left(\frac{\sigma_{t^*}}{\alpha_{t^*}}\right)$ is monotonic such that there always exists a unique solution $t^* \in [0,1]$ for any coupling parameter $\rho>0$.

\end{proof}

\subsection{Proof of~\Cref{cor:collapse}}
\label{app:proof_equivalence}

PnP-DM~\cite{wu2024principled} uses EDM diffusion~\cite{karras2022elucidating} to formulate the diffusion process:
\begin{equation}
    \mathbf{x}_t = s_{\mathrm{E}}(t)
    \bigl( \bx_0+\sigma_{\mathrm{E}}(t)\boldsymbol{\epsilon} \bigr).
\end{equation}
While under the stochastic-interpolant parameterization, we have
\begin{equation}
    \mathbf{x}_t
    =
    \alpha_t\bx_0+\sigma_t\boldsymbol{\epsilon}.
\end{equation}
These two parameterizations have the relation
\begin{equation}
    s_{\mathrm{E}}(t)=\alpha_t,
    \qquad
    \sigma_{\mathrm{E}}(t)=\frac{\sigma_t}{\alpha_t}.
    \label{eq:param_relation}
\end{equation}

Define
\begin{equation}
    \gamma_t
    =
    \dot\alpha_t\sigma_t-\alpha_t\dot\sigma_t,
    \qquad
    \lambda_t
    =
    -\frac{\gamma_t}{\alpha_t} =
    \frac{\alpha_t\dot\sigma_t-\dot\alpha_t\sigma_t}{\alpha_t}.
\end{equation}

\begin{lemma}\label{lem:edm-si-deriv}
Under the parameter correspondence in~\eqref{eq:param_relation}, we have
\begin{equation}
    \dot\sigma_{\mathrm{E}}(t)
    =
    \frac{\lambda_t}{\alpha_t}.
\end{equation}
\end{lemma}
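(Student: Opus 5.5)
The claim follows from a direct quotient-rule differentiation of the correspondence in~\eqref{eq:param_relation}. Since $\sigma_{\mathrm{E}}(t)=\sigma_t/\alpha_t$, and $\alpha_t>0$ on the relevant interval (the same positivity assumption used in~\Cref{thm:denoising}, needed for the ratio to be defined and differentiable), I will differentiate in $t$ using assumption (ii) on $(\alpha_t,\sigma_t)$:
\begin{equation*}
    \dot\sigma_{\mathrm{E}}(t)
    = \frac{\dot\sigma_t\alpha_t-\sigma_t\dot\alpha_t}{\alpha_t^2}.
\end{equation*}

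Next I will rewrite the numerator in terms of $\gamma_t$. By definition, $\gamma_t=\dot\alpha_t\sigma_t-\alpha_t\dot\sigma_t$, so the numerator is exactly $-\gamma_t$. This gives $\dot\sigma_{\mathrm{E}}(t)=-\gamma_t/\alpha_t^2$. Splitting off one factor of $\alpha_t$ yields
\begin{equation*}
    \dot\sigma_{\mathrm{E}}(t)=\frac{1}{\alpha_t}\cdot\left(-\frac{\gamma_t}{\alpha_t}\right)=\frac{\lambda_t}{\alpha_t},
\end{equation*}
by the definition of $\lambda_t$.

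There is no real obstacle. The only point needing care is the domain: the identity holds wherever $\alpha_t\neq 0$. For the Linear and GVP schedules this excludes $t=1$, which is consistent with the truncation to $[\epsilon,1-\epsilon]$ already adopted in the paper. I will note this restriction explicitly so that the lemma can later be substituted into the EDM reverse SDE in the proof of~\Cref{cor:collapse}.
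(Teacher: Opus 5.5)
Your proof is correct and follows the same route as the paper: apply the quotient rule to $\sigma_t/\alpha_t$, recognize the numerator as $-\gamma_t$, and rewrite $-\gamma_t/\alpha_t^2$ as $\lambda_t/\alpha_t$. Your explicit restriction to $\alpha_t \neq 0$ is a small clarification that the paper leaves implicit.
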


\begin{proof}
Since $\sigma_{\mathrm{E}}(t)=\sigma_t/\alpha_t$,
\begin{align}
    \dot\sigma_{\mathrm{E}}(t)
    = \frac{
        \dot\sigma_t\alpha_t-\sigma_t\dot\alpha_t
    }{
        \alpha_t^2
    } 
    =
    -\frac{
        \dot\alpha_t\sigma_t-\alpha_t\dot\sigma_t
    }{
        \alpha_t^2
    } 
    =
    -\frac{\gamma_t}{\alpha_t^2} =
    \frac{\lambda_t}{\alpha_t}.
\end{align}
\end{proof}

Now, we can prove that the PnP-DM reverse-time SDE under the EDM parameterization
\cite{wu2024principled,karras2022elucidating} 
\begin{equation}\label{eq:pnpdm-eq7}
    \mathrm{d}\mathbf{x}_t
    =
    \left[
        \frac{\dot{s_{\mathrm{E}}}(t)}{s_{\mathrm{E}}(t)}\mathbf{x}_t
        -
        2s_{\mathrm{E}}(t)^2
        \dot\sigma_{\mathrm{E}}(t)
        \sigma_{\mathrm{E}}(t)
        \nabla_{\mathbf{x}_t}
        \log p
        \!\left(
            \frac{\mathbf{x}_t}{s_{\mathrm{E}}(t)};
            \sigma_{\mathrm{E}}(t)
        \right)
    \right]\mathrm{d}t
    +
    s_{\mathrm{E}}(t)
    \sqrt{
        2\dot\sigma_{\mathrm{E}}(t)
        \sigma_{\mathrm{E}}(t)
    }\,
    \mathrm{d}\mathbf{w}_t,
\end{equation}
is a special case of the FlowSGS reverse-time SDE.

\begin{proof}[Proof of~\Cref{cor:collapse}]
First, by replacing the EDM diffusion coefficients using~\eqref{eq:param_relation}, we have
\begin{align}
    \frac{\dot{s_{\mathrm{E}}}(t)}{s_{\mathrm{E}}(t)}
    &=
    \frac{\dot\alpha_t}{\alpha_t}, \\
    2s_{\mathrm{E}}(t)^2\dot\sigma_{\mathrm{E}}(t)\sigma_{\mathrm{E}}(t)
    &=
    2\alpha_t^2
    \left(
        \frac{\lambda_t}{\alpha_t}
    \right)
    \left(
        \frac{\sigma_t}{\alpha_t}
    \right) =
    2\lambda_t\sigma_t, \\
    s_{\mathrm{E}}(t)^2
    2\dot\sigma_{\mathrm{E}}(t)\sigma_{\mathrm{E}}(t)
    &=
    \alpha_t^2
    2
    \left(
        \frac{\lambda_t}{\alpha_t}
    \right)
    \left(
        \frac{\sigma_t}{\alpha_t}
    \right) =
    2\lambda_t\sigma_t.
\end{align}
This shows that the PnP-DM reverse-time SDE has drift coefficient
$\dot\alpha_t/\alpha_t$, score coefficient $2\lambda_t\sigma_t$, and
diffusion coefficient squared $2\lambda_t\sigma_t$.

The velocity-score relation can be written as
\begin{equation}
    \mathbf{v}(\mathbf{x}_t,t)
    =
    \frac{\dot\alpha_t}{\alpha_t}\mathbf{x}_t
    -
    \lambda_t\sigma_t\mathbf{s}(\mathbf{x}_t,t).
\end{equation}
Substituting this into the SI reverse-time SDE in~\eqref{eq:reverse-sde} gives
\begin{equation}
    \begin{aligned}
        \mathrm{d}\mathbf{x}_t
        &=
        \left[
            \frac{\dot\alpha_t}{\alpha_t}\mathbf{x}_t
            -
            \lambda_t\sigma_t\bs(\mathbf{x}_t,t)
            -
            \frac{w_t}{2}\bs(\mathbf{x}_t,t)
        \right]\mathrm{d}t
        +
        \sqrt{w_t}\,\mathrm{d}\bar{\mathbf{w}}_t \\
        &=
        \left[
            \frac{\dot\alpha_t}{\alpha_t}\mathbf{x}_t
            -
            \left(
                \lambda_t\sigma_t+\frac{w_t}{2}
            \right)
            \bs(\mathbf{x}_t,t)
        \right]\mathrm{d}t
        +
        \sqrt{w_t}\,\mathrm{d}\bar{\mathbf{w}}_t .
    \end{aligned}
\end{equation}
Choosing $w_t = 2\lambda_t\sigma_t$ yields
\begin{equation}
    \mathrm{d}\mathbf{x}_t
    =
    \left[
        \frac{\dot\alpha_t}{\alpha_t}\mathbf{x}_t
        -
        2\lambda_t\sigma_t
        \bs(\mathbf{x}_t,t)
    \right]\mathrm{d}t
    +
    \sqrt{2\lambda_t\sigma_t}\,
    \mathrm{d}\bar{\mathbf{w}}_t.
\end{equation}
This matches PnP-DM's prior step reverse-time SDE.
In FlowSGS, the initialization at $t^*$ follows $\bx_{t^*}=\alpha_{t^*}\bz^{k+1}$, which directly translates to PnP-DM's initialization since $\alpha_{t^*}=s_{\mathrm{E}}(t^*)$.
In summary, PnP-DM is recovered as the special case of FlowSGS with equivalent reverse-time SDE and initializations, when $w_t=2\lambda_t\sigma_t$.

For VP-SDE, $\lambda_t\sigma_t=\beta_t/2$, so this diffusion coefficient aligns with and recovers the reverse process in VP-SDE.
\begin{equation}
    w_t^{\mathrm{KL}}
    =
    2\lambda_t\sigma_t
    =
    \beta_t.
\end{equation}
\end{proof}

\begin{remark}
    EDM fixes the reverse-SDE diffusion coefficient to the value implied by the
    forward diffusion process $w_t=s_{\mathrm{E}}(t) \sqrt{2\dot\sigma_{\mathrm{E}}(t)\sigma_{\mathrm{E}}(t)}$. In contrast, the SI framework allows
    any $w_t\geq0$ at sampling time: $w_t=0$ gives the probability-flow ODE,
    $w_t=2\lambda_t\sigma_t$ recovers PnP-DM, and other choices can be tuned
    without retraining.
\end{remark}

\subsection{Proof of \Cref{cor:convergence}}
\label{app:convergence}

Rather than reproduce the full Fokker--Planck calculation, we show that
FlowSGS satisfies all the structural conditions used in
PnP-DM's~\cite{wu2024principled} Theorem~3.1, so that the same proof
applies after a parameter substitution. We highlight three aspects
and then describe the substitution.

\paragraph{Identical SGS skeleton.}
The likelihood step of FlowSGS (\Cref{alg:flowsgs}, line~3) is identical
to PnP-DM's: both target the same conditional distribution
$\pi(z\vert x)\propto\exp[-f(z;y)-\|x-z\|^2/2\rho^2]$ via Langevin
dynamics. PnP-DM's Brownian-bridge interpretation
(\cite[Proposition~A.2]{wu2024principled}) and the resulting KL/Fisher
contraction (Lemma~2 of \cite{vempala2019rapid}) therefore carry
over to FlowSGS.

\paragraph{Prior step has the canonical drift--diffusion form.}
PnP-DM's key technical lemma (\cite[Lemma~A.4]{wu2024principled}) is a
generic Fokker--Planck argument that bounds the time-derivative of the
KL divergence between any pair of diffusion processes that share the
same diffusion coefficient $c(t)$ and differ only in their drifts. It
makes no assumption about \emph{how} the drift is constructed.

The FlowSGS prior-step SDE~\eqref{eq:sde_v} is exactly such a process: its drift is an affine combination of the velocity $\mathbf v(\mathbf x,t)$ and the state $\mathbf x$, and its diffusion coefficient is $\sqrt{w_t}$. The non-stationary process (driven by the pretrained $\mathbf{v}_\theta$) and the stationary process (driven by the true $\mathbf v$) share the same $c(t)=\sqrt{w_t}$, which is the only structural requirement of the lemma. Hence PnP-DM's lemma
applies to FlowSGS's prior step verbatim, with the squared drift gap
proportional to $\|\mathbf v_\theta-\mathbf v\|_2^2$ instead of
$\|\mathbf s_\theta-\mathbf s\|_2^2$.

\paragraph{SI-equivalent assumptions.}
PnP-DM's Theorem~3.1 relies on two assumptions:
\begin{itemize}
  \item \emph{(Assumption A.1)} $\sigma_E(t)$ is strictly increasing, so there is a unique $t^\star$ with $\sigma_E(t^\star)=\rho$. Under the SI parameter correspondence $\sigma_E(t)=\sigma_t/\alpha_t$ (\Cref{cor:collapse}), this becomes the requirement that the noise-to-signal ratio $\sigma_t/\alpha_t$ is strictly increasing -- precisely the well-posedness condition of our~\Cref{thm:denoising}, and satisfied by all schedules in \Cref{tab:interpolant_schedules}.
  \item \emph{(Diffusion-coefficient floor)} The infimum $\delta:=\inf_{[0,t^\star]}s_E(t)\sqrt{2\dot\sigma_E\sigma_E}$ of PnP-DM's reverse-SDE diffusion coefficient is strictly positive. Under the same correspondence this infimum becomes $\inf_{[0,t^\star]}\sqrt{w_t}$, where $w_t$ is FlowSGS's sampling-time diffusion coefficient. This is where the assumptions become \emph{looser}: PnP-DM's $\delta$ is fixed by the EDM forward process and can vanish at the endpoints (e.g.\ for VP/iDDPM), forcing the workaround in ~\cite[Appendix~A.3]{wu2024principled}; FlowSGS lets the user  pick $w_t$ freely at sampling time, so the floor is a constraint on the user's choice rather than on the model.
\end{itemize}
The score-accuracy assumption of PnP-DM translates to an
$L^2$-accuracy condition on the velocity field under $\pi_\tau$, which
is equivalent to the score condition via the velocity--score relation
(\eqref{eq:score-from-velocity} in the main paper) up to bounded scalar factors.

\paragraph{Reduction.}
Combining the three remarks, every step of the proof of Theorem~3.1 in~\cite[Appendix~A.2]{wu2024principled} goes through: the likelihood-step contraction is unchanged; the prior-step
contraction follows from \cite[Lemma~A.4]{wu2024principled} applied to
the canonical-form SDE described in Remark~2, with PnP-DM's $v(t)$
replaced by $\sqrt{w_t}$ and the score-error term replaced by the
corresponding velocity-error term; the telescoping over $K$ outer
iterations is identical. The resulting non-asymptotic bound has the
same shape as PnP-DM's: an $O(1/K)$ contraction from initialization
governed by $\min(\rho,\delta)^2$, plus a velocity-approximation floor
governed by $\delta^2$. \qed

\section{Inverse problem setups}
\label{app:setup}

\subsection{Toy problem}
\label{app:toy}

In our toy problem, we used images from the CelebA dataset~\cite{liu2015faceattributes}\footnote{License unknown.}.
All images are converted to grayscale, resized to 64$\times$64, and normalized to [-1, 1] before any calculations.
We generate a random Gaussian matrix $\mathbf{A} \in \mathbb{R}^{1024\times4096}$ as the compressed sensing matrix.
The forward model is $\mathbf{y} = \mathbf{A}\mathbf{x} + \boldsymbol{\epsilon}$, where $\boldsymbol{\epsilon} \sim \mathcal{N}(0, \sigma_n^2\mathbf{I})$ and $\sigma_n = 0.01$.
For the prior, we sample 50,000 male face images from the CelebA dataset and fit a multivariate Gaussian distribution $\mathcal{N}(\boldsymbol{\mu}_0, \mathbf{\Sigma}_0)$ to it.
Along with the Gaussian likelihood, this gives us a Gaussian posterior
\begin{equation}
    p(\mathbf{x}\vert\mathbf{y}) = \mathcal{N}(\boldsymbol{\mu}, \mathbf{\Sigma}), \quad \boldsymbol{\mu} = \mathbf{\Sigma} \left( \mathbf{\Sigma}_0^{-1} \boldsymbol{\mu}_0 + \frac{1}{\sigma_n^2} \mathbf{A}^T \mathbf{y} \right), \quad \mathbf{\Sigma} = \left( \mathbf{\Sigma}_0^{-1} + \frac{1}{\sigma_n^2} \mathbf{A}^T \mathbf{A} \right)^{-1}
\end{equation}
In~\Cref{fig:toy_setup}, we illustrate the measurement, prior, and analytical posterior used in the toy problem.

\begin{figure}[h]
    \centering
    \includegraphics[width=0.75\linewidth]{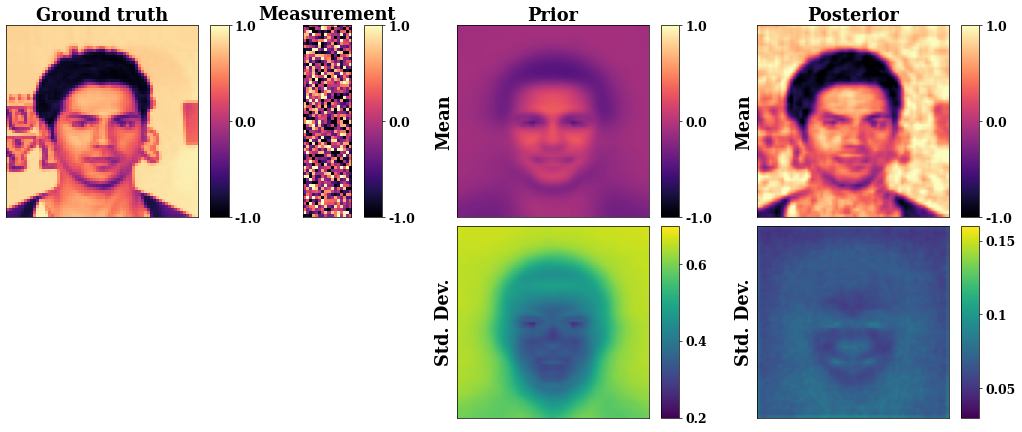}
    \caption{\bf Setup of the toy compressed sensing problem.}
    \label{fig:toy_setup}
\end{figure}

\subsection{Pixel-space model experiments}
\label{app:exp_pixel}

In our experiments with pixel-space models (\Cref{sec:results_linear}), we use the FFHQ dataset~\cite{karras2019style}\footnote{Creative Commons BY-NC-SA 4.0 license.} and fastMRI single-coil knee dataset~\cite{zbontar2018fastmri}.
All images are resized to 256$\times$256 and normalized to [0, 1] as ground truths, which are then used to generate degraded measurements for different tasks.

\paragraph{Motion deblurring and Gaussian deblurring}
The forward model $\mathbf{y} = \mathbf{H}\mathbf{x} + \boldsymbol{\epsilon}$, where $\mathbf{H}$ is a convolution operator with a blur kernel
and $\boldsymbol{\epsilon} \sim \mathcal{N}(0, \sigma_n^2\mathbf{I})$ with $\sigma_n = 0.05$.
For motion deblurring, we generate 64$\times$64 motion blur kernels using this code\footnote{\url{https://github.com/LeviBorodenko/motionblur} (license unknown).} with trajectory intensity $0.5$ and then pad them to 256$\times$256.
For Gaussian deblurring, we directly generate 256$\times$256 blur kernels with standard deviation $\sigma_k = 3.0$ pixels.
All kernels are normalized to have a sum of 1.0.


\paragraph{Super-resolution}
The degradation process is modeled as $\mathbf{y} = \mathbf{S}\mathbf{x} + \boldsymbol{\epsilon}$,
where $\mathbf{S}$ is a $4{\times}$ average pooling downsampling operator,
and $\boldsymbol{\epsilon} \sim \mathcal{N}(0, \sigma_n^2\mathbf{I})$ with $\sigma_n = 0.05$.

\paragraph{Compressed sensing MRI}
The forward model is $\mathbf{y} = \mathbf{M}\mathcal{F}(\mathbf{x}) + \boldsymbol{\epsilon}$, where $\mathbf{M}$ is the masking operator, $\mathcal{F}(\cdot)$ is the 2D Fourier transform, and $\boldsymbol{\epsilon} \sim \mathcal{N}(0, \sigma_n^2\mathbf{I})$ with $\sigma_n = 0.02$.
We generate both Cartesian and radial sampling masks with an acceleration rate of 8.0.
The center 8\% components are fully sampled with the Cartesian mask, while the other lines are randomly sampled.
Lines in the radial mask are uniformly sampled.

\paragraph{Fourier phase retrieval}
The forward model is $\mathbf{y} = \vert\mathcal{F}(\mathbf{P}\mathbf{x})\vert + \boldsymbol{\epsilon}$, where $\mathbf{P}$ is the padding operator with an oversampling rate of 8.0 (padding the 256$\times$256 images to 512$\times$512), $\mathcal{F}(\cdot)$ is the 2D Fourier transform, and $\boldsymbol{\epsilon} \sim \mathcal{N}(0, \sigma_n^2\mathbf{I})$ with $\sigma_n = 0.01$.
The oversampling operation is used to increase the number of measurements to reduce the ill-posedness of the problem~\cite{wu2024principled}.

\subsection{Latent-space model experiments}
\label{app:exp_latent}

In our experiments with latent-space models (\Cref{sec:results_linear}), we use cat images from the AFHQ dataset~\cite{choi2020stargan}\footnote{Creative Commons BY-NC 4.0 license.}, which are resized to 512$\times$512 and normalized to [0,1].
The problem setups are the same as motion deblurring and super-resolution in~\Cref{app:exp_pixel}, except that super-resolution's downsample factor is 8.0.

\section{Implementation details}
\label{app:details}

\subsection{Network architectures}
\label{app:network}

In our toy problem and benchmarking experiments with pixel-space models (~\Cref{sec:toy,sec:results_linear}), we pretrain our own diffusion and flow models using the NCSN++ architecture~\cite{song2021scorebased} implemented by InverseBench~\cite{zheng2025inversebench}.
All models are trained for 500 epochs with the Adam optimizer on each dataset, and use an exponential moving average (EMA) for better performance.
For the diffusion models, we use the VP-SDE as the diffusion schedule, while for the flow models, we train with the VP-SDE, linear, and GVP schedules (see summary in~\Cref{tab:interpolant_schedules}) and compare them in our ablation studies (see~\Cref{tab:pixel_results,tab:ablation_interp}).
Within every task, all diffusion/flow models are trained on the same training set and share the same architecture and optimization parameters for fair comparisons.
We use the off-the-shelf Stable Diffusion 3.5 Medium~\cite{esser2024scaling} as our latent-space flow prior.

\subsection{Likelihood step with Langevin dynamics}
\label{app:lgvd}

We assume a Gaussian noise model $\mathbf{n} \sim \mathcal{N}(0,\tau^2 \mathbf{I})$, and the likelihood step in~\eqref{eq:llh} follows the form:
\begin{equation}
    \mathbf{z} \sim \pi(\mathbf{z} \vert \mathbf{x}^k, \mathbf{y})
    \propto \exp\!\left[
        -\frac{\|\mathcal{A}(\mathbf{z}) - \mathbf{y}\|^2}{2\tau^2}
        - \frac{\|\mathbf{z} - \mathbf{x}^k\|^2}{2\rho_k^2}
    \right],
    \label{eq:likelihood-dist}
\end{equation}
where $\tau$ represents the standard deviation for the noise model.
Note that when the forward model $\mathcal{A}$ is linear, there exists an analytical solution to~\cite{wu2024principled}:
\begin{equation}
    \bz \sim \mathcal{N}(\mathbf{m}(\mathbf{x}^k),\Lambda^{-1})
\end{equation}
where $\Lambda := \frac{1}{\tau^2}A^{\top} A + \frac{1}{\rho_k^2} I \quad \text{and} \quad \mathbf{m}(\mathbf{x}) := \Lambda^{-1} \left( \frac{1}{\tau^2}A^{\top}\mathbf{y} + \frac{1}{\rho_k^2} \mathbf{x} \right)$.
Despite the analytical form, we sample from ~\eqref{eq:likelihood-dist} using Langevin dynamics~\cite{uhlenbeck1930theory} for both linear and non-linear problems. 
This choice provides a unified sampling framework, eliminates the need for task-specific derivations, and ensures robust performance for the forward model without a closed-form inverse. 
We use the implementation of Langevin dynamics from InverseBench~\cite{zheng2025inversebench}. 
For each Langevin step, we compute the gradient:
\begin{equation}
    \nabla E(\mathbf{z}) =
    \frac{1}{2\tau^2} \nabla_{\mathbf{z}} \|\mathcal{A}(\mathbf{z}) - \mathbf{y}\|^2
    + \frac{\mathbf{z} - \mathbf{x}^k}{\rho_k^2},
    \label{eq:langevin_gradient}
\end{equation}
and update $\mathbf{z}$ via:
\begin{equation}
    \mathbf{z}^{(\ell+1)} = \mathbf{z}^{(\ell)}
    - \eta \,\nabla E\!\left(\mathbf{z}^{(\ell)}\right)
    + \sqrt{2\eta}\;\boldsymbol{\epsilon}, \qquad
    \boldsymbol{\epsilon} \sim \mathcal{N}(\mathbf{0}, \mathbf{I}),
    \label{eq:langevin-update}
\end{equation}
where $\eta$ denotes the Langevin step size.
For each task, we tune $\tau$ and $\eta$ for the best reconstruction quality. 
We observed that both PnP-DM~\cite{wu2024principled} and FlowSGS achieve optimal performance when $\tau$ is set to the true noise level in the measurements.
We use 100 steps in the Langevin sampling in our main experiments.

When sampling with a latent-space model with a decoder $D_\theta(\cdot)$, the gradient has the form
\begin{equation}
    \nabla E(\mathbf{z}) =
    \frac{1}{2\tau^2} \nabla_{\mathbf{z}} \|\mathcal{A}(D_\theta(\mathbf{z})) - \mathbf{y}\|^2
    + \frac{\mathbf{z} - \mathbf{x}^k}{\rho_k^2},
    \label{eq:langevin_gradient_latent}
\end{equation}
which requires differentiating through the decoder. 
Note that this will significantly increase the computational cost of the likelihood step, so we only use 32 steps when sampling with latent-space models.

\subsection{Annealing schedule}
\label{app:annealing}

We use 100 iterations in SGS and summarize the annealing hyperparameters used in each task for FlowSGS in~\Cref{tab:annealing}.
We discovered that the sampling quality is most sensitive to $\rho_{\min}$, which determines the stationary coupling of the likelihood and prior step, as well as the starting point of the ending prior step that refines the final outputs.
A reasonably large starting level $\rho_0$ and an annealing rate $\alpha>0.9$ lead to similar performance.

\begin{table}[htbp]
    \small
    \centering
    \caption{
        \textbf{Hyperparameters of the annealing schedule for different tasks.}
    }
    \label{tab:annealing}
    \begin{tabular}{lccc}
        \toprule
        \bf Inverse problem & Starting level ($\rho_0$) & Minimum level ($\rho_{\min}$) & Decay rate ($\alpha$) \\ 
        \midrule
        Toy problem & 10 & 0.1 & 0.9 \\
        Motion deblurring & 10 & 0.1 & 0.9 \\
        Gaussian deblurring & 10 & 0.1 & 0.9 \\
        Super-resolution ($4\times$) & 10 & 0.1 & 0.9 \\
        CS-MRI (Cartesian, $8\times$) & 10 & 0.1 & 0.9 \\
        CS-MRI (radial, $8\times$) & 10 & 0.02 & 0.9 \\
        Fourier phase retrieval & 10 & 0.01 & 0.9 \\
        \bottomrule
    \end{tabular}
\end{table}

\section{Additional results}
\label{app:results}

\begin{figure}
    \centering
    \includegraphics[width=\linewidth]{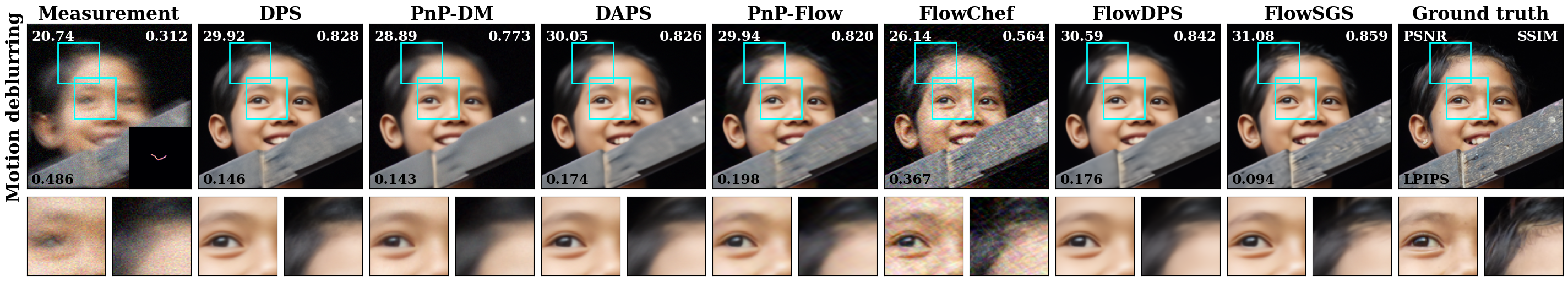}
    \includegraphics[width=\linewidth]{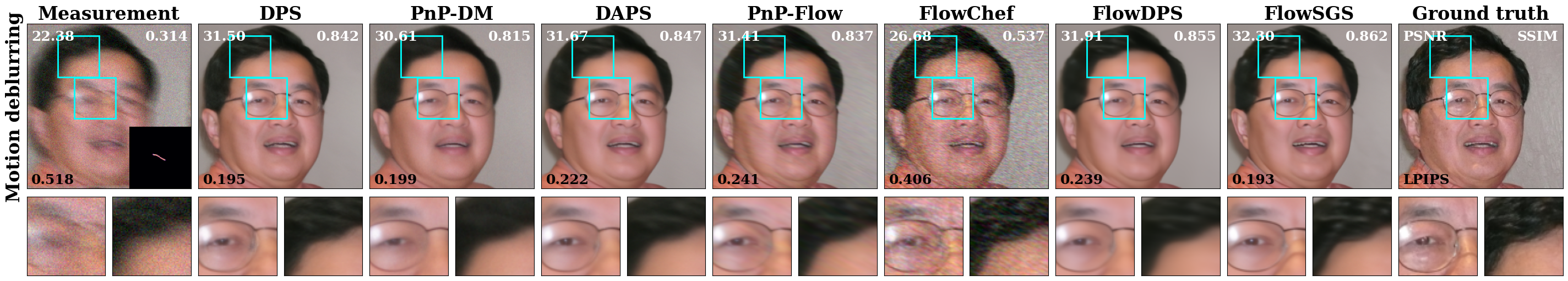}
    \includegraphics[width=\linewidth]{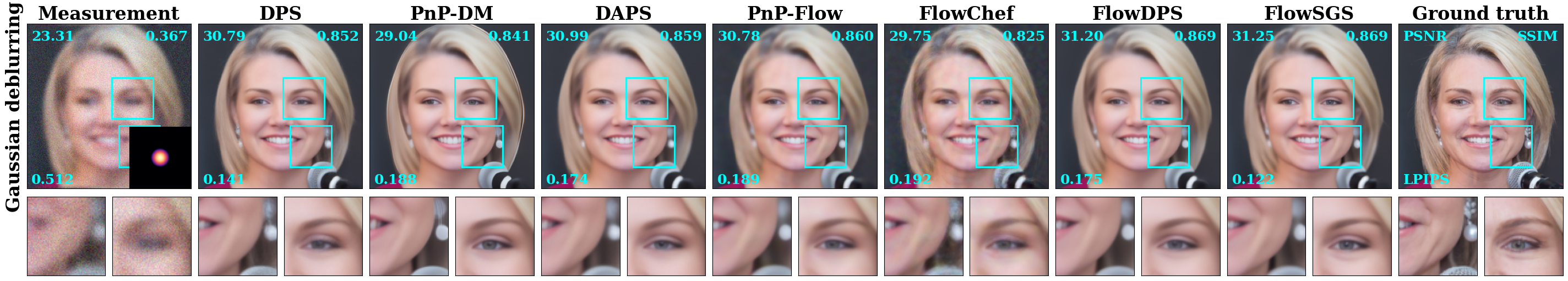}
    \includegraphics[width=\linewidth]{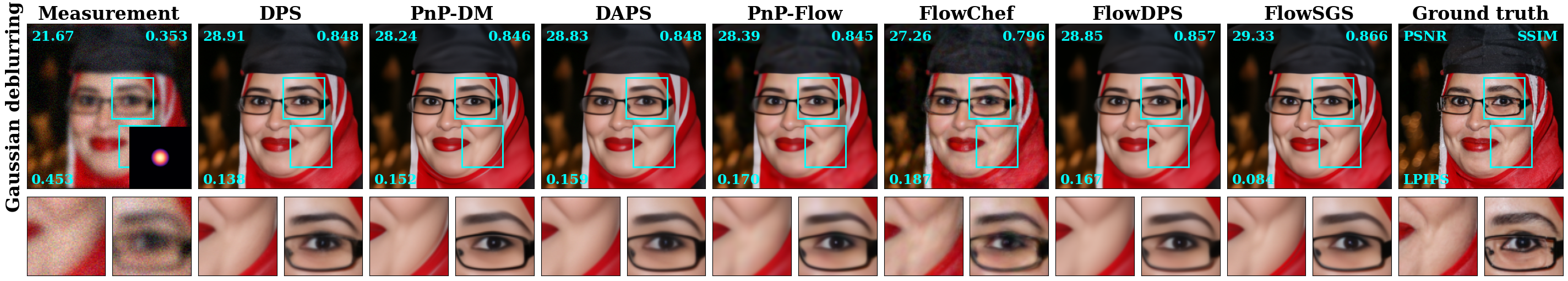}
    \includegraphics[width=\linewidth]{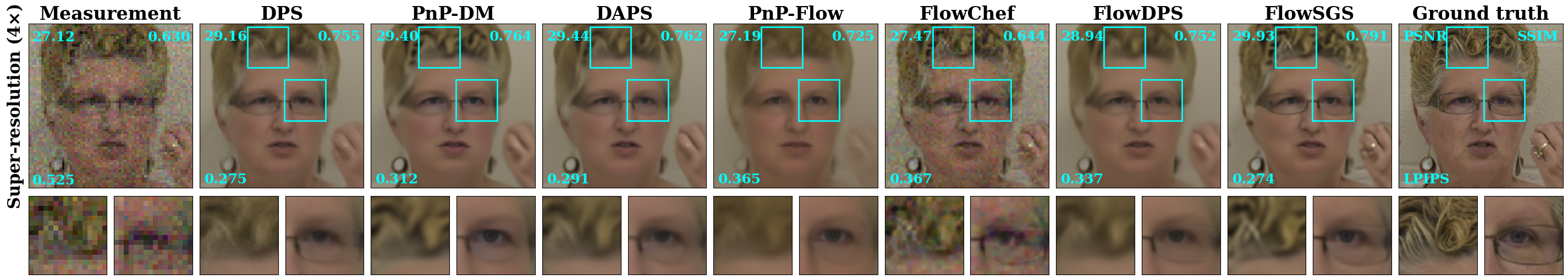}
    \includegraphics[width=\linewidth]{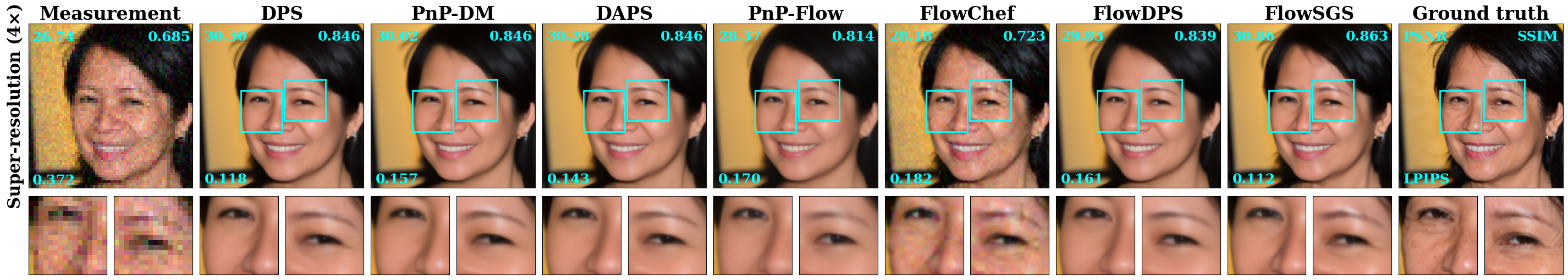}

    \caption{
        \textbf{Additional visual samples on motion deblurring, Gaussian deblurring, and 4$\times$ super-resolution.}
        We draw 8 posterior samples for each method and show the mean image.
        }
    \label{fig:pixel_results_supp}
\end{figure}

\begin{figure}
    \centering
    \includegraphics[width=\linewidth]{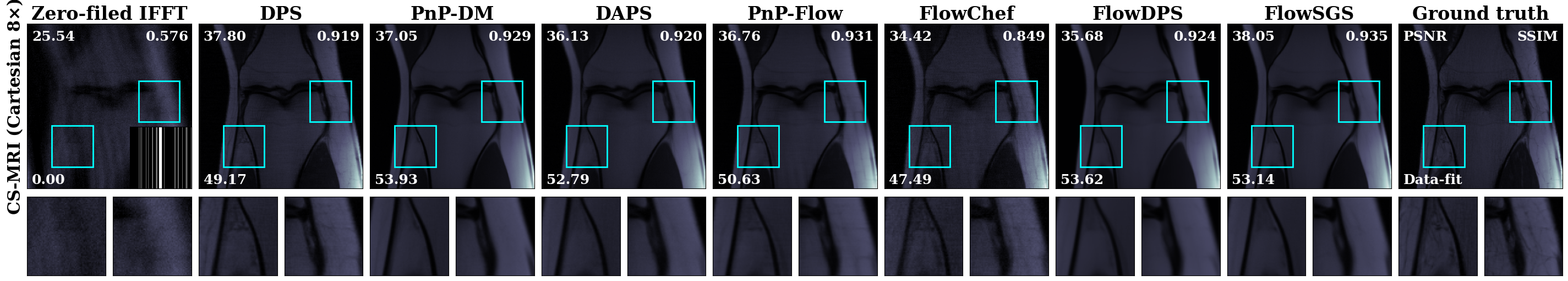}
    \includegraphics[width=\linewidth]{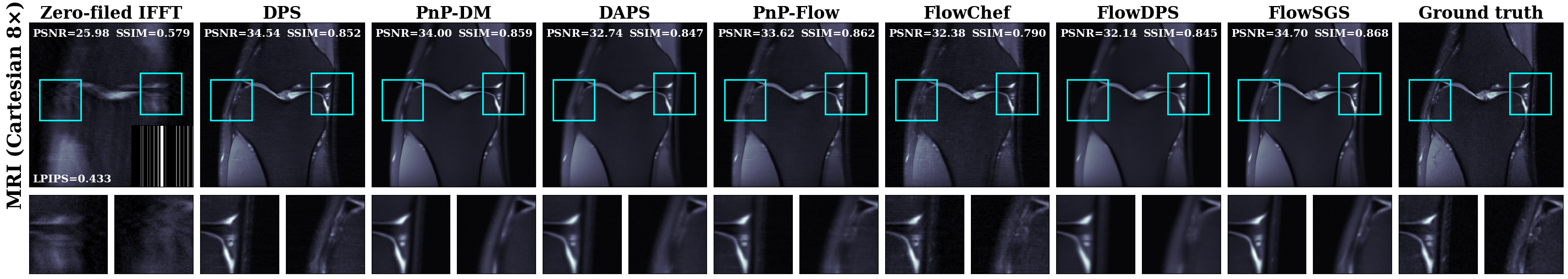}
    \includegraphics[width=\linewidth]{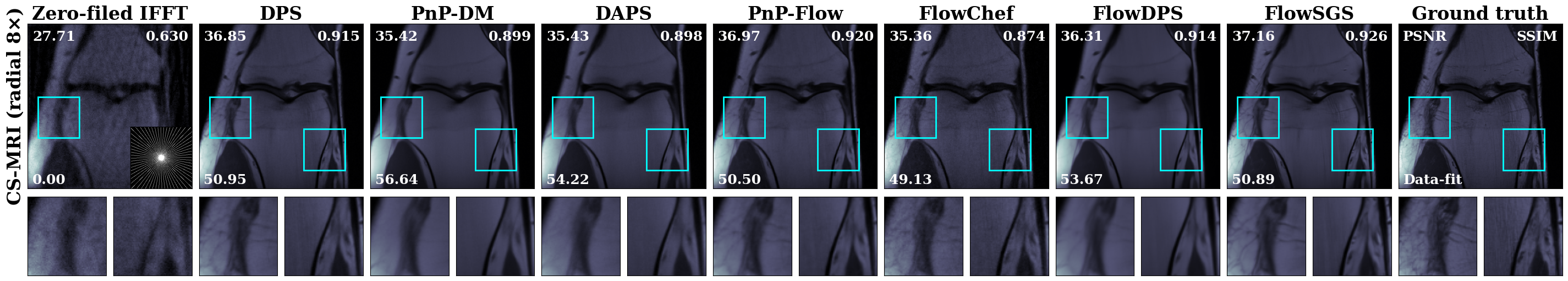}
    \includegraphics[width=\linewidth]{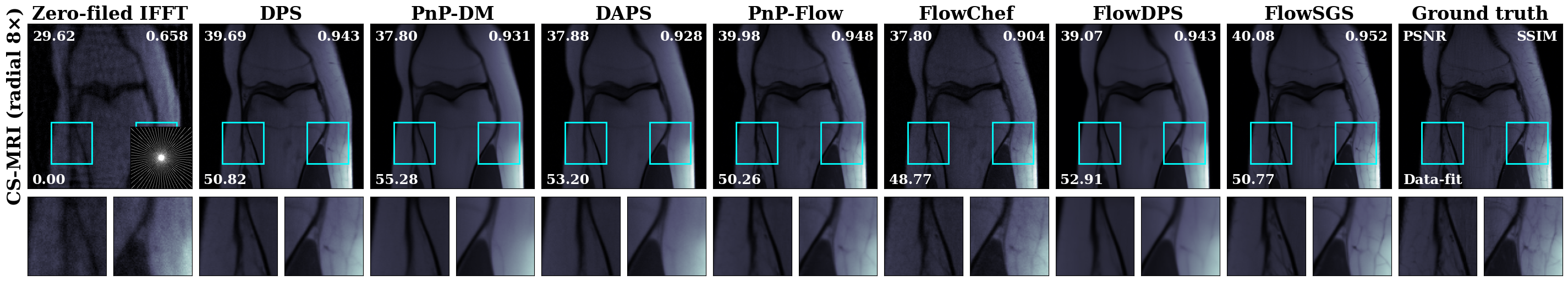}
    \caption{
        \textbf{Additional visual samples on compressed sensing MRI.}
        We draw 8 posterior samples for each method and show the mean image.
        }
    \label{fig:mri_results_supp}
\end{figure}

\subsection{Main experiments}
\label{app:main}

In this section, we provide additional results in our main experiments using pixel-space models in~\Cref{sec:results_linear,sec:fpr}.
In~\Cref{tab:pixel_results_full}, we show the standard deviation of the mean for each metric provided in~\Cref{tab:pixel_results}.
We also show the runtime for each method when drawing 8 posterior samples in a batch in~\Cref{tab:time}.
All timing is averaged over 100 runs on a single NVIDIA H100 GPU.
Note that FlowSGS's runtime differs across different interpolant schedules, because different schedules convert to different starting time points for the prior step SDE, leading to different NFEs (881 for linear, 828 for GVP, and 481 for VP-SDE).
With all considered interpolant schedules, FlowSGS is faster than existing diffusion-based samplers.
In~\Cref{fig:pixel_results_supp,fig:mri_results_supp,fig:fpr_app}, we show additional visual samples for the 6 inverse problems in our main experiments in \Cref{sec:results_linear,sec:fpr}.

\begin{table}
    \footnotesize
    \setlength{\tabcolsep}{3pt} 
    \centering
    \caption{
        \textbf{Quantitative results averaged over 100 test images with errors.}
        On each of 100 test images, we take the mean across 8 samples for each method and score the mean reconstruction. We report the mean $\pm$ standard deviation of the mean image score for several metrics. For Fourier phase retrieval, performance is computed on the best of each batch of 8 samples rather than their mean.
    } 
    \label{tab:pixel_results_full}

    \begin{tabular}{lccccccccc}
    \toprule
    \multirow{2}{*}{\bf Method} & \multicolumn{3}{c}{\bf Motion deblurring} & \multicolumn{3}{c}{\bf Gaussian deblurring}\\ 
    \cmidrule(lr){2-4} \cmidrule(lr){5-7}  
    & PSNR ($\uparrow$) & SSIM ($\uparrow$) & LPIPS ($\downarrow$) & PSNR ($\uparrow$) & SSIM ($\uparrow$) & LPIPS ($\downarrow$) \\ \midrule
    DPS~\cite{chung2023diffusion}      & 28.14 $\pm$ 0.31 & 0.803 $\pm$ 0.007 & 0.198 $\pm$ 0.005& 29.55 $\pm$ 0.23 & 0.829 $\pm$ 0.005 & 0.167 $\pm$ 0.004  \\
    PnP-DM~\cite{wu2024principled}     & 28.53 $\pm$ 0.30 & 0.819 $\pm$ 0.007& 0.220 $\pm$ 0.006 & 29.44 $\pm$ 0.22 & 0.836 $\pm$ 0.005 & 0.197 $\pm$ 0.004  \\
    DAPS~\cite{zhang2025improving}     & 28.00 $\pm$ 0.31 & 0.801 $\pm$ 0.007 & 0.230 $\pm$ 0.006& 29.48 $\pm$ 0.23& 0.829 $\pm$ 0.005& 0.199 $\pm$ 0.004\\
    PnP-Flow~\cite{martin2025pnpflow}  & 27.15 $\pm$ 0.34 & 0.787 $\pm$ 0.007 & 0.254 $\pm$ 0.005& 29.42 $\pm$ 0.23 & 0.832 $\pm$ 0.005& 0.208 $\pm$ 0.004 \\
    FlowChef~\cite{patel2025flowchef}  & 25.26 $\pm$ 0.16 & 0.587 $\pm$ 0.005 & 0.388 $\pm$ 0.004 & 28.66 $\pm$ 0.21 & 0.800 $\pm$ 0.005& 0.209 $\pm$ 0.003 \\
    FlowDPS~\cite{kim2025flowdps}      & 28.42  $\pm$ 0.32  & 0.816 $\pm$ 0.007& 0.229 $\pm$ 0.006& 29.78 $\pm$ 0.22 & 0.840 $\pm$ 0.005& 0.202 $\pm$ 0.004\\ \midrule 
    FlowSGS (VP)     & 27.69 $\pm$ 0.23 & 0.709 $\pm$ 0.005 & 0.209 $\pm$ 0.006& 28.65 $\pm$ 0.17 & 0.723 $\pm$ 0.003& 0.190 $\pm$ 0.005 \\
    FlowSGS (GVP)    & \underline{29.16 $\pm$ 0.29} & \underline{0.825  $\pm$ 0.006} & \underline{0.162 $\pm$ 0.005} & \underline{29.97 $\pm$ 0.22} & \underline{0.837 $\pm$ 0.005} & \underline{0.147 $\pm$ 0.004}  \\
    FlowSGS (Linear) & \textbf{29.22 $\pm$ 0.31} & \textbf{0.836 $\pm$ 0.006} & \textbf{0.154 $\pm$ 0.005} & \textbf{30.09 $\pm$ 0.22} & \textbf{0.847 $\pm$ 0.005} & \textbf{0.141 $\pm$ 0.004}  \\
    \bottomrule
    \end{tabular}

    \medskip 
 
    \begin{tabular}{lccccccccc}
    \toprule
    \multirow{2}{*}{\bf Method} & \multicolumn{3}{c}{\bf CS-MRI (Cartesian, $8\times$)} & \multicolumn{3}{c}{\bf CS-MRI (radial, $8\times$)}  \\ 
    \cmidrule(lr){2-4} \cmidrule(lr){5-7}
    & PSNR ($\uparrow$) & SSIM ($\uparrow$) & Data-fit ($\downarrow$) & PSNR ($\uparrow$) & SSIM ($\uparrow$) & Data-fit ($\downarrow$)\\ \midrule
    DPS~\cite{chung2023diffusion}      & 31.93 $\pm$ 0.29 & 0.819 $\pm$ 0.007 & \underline{48.189 $\pm$ 0.041} & 34.57 $\pm$ 0.21 & 0.858 $\pm$ 0.006 & 50.347 $\pm$ 0.035 \\
    PnP-DM~\cite{wu2024principled}     & 32.27 $\pm$ 0.19& 0.824 $\pm$ 0.008 & 56.407 $\pm$ 0.171 & 33.36 $\pm$ 0.19 & 0.837 $\pm$ 0.007 & 58.001 $\pm$ 0.215 \\
    DAPS~\cite{zhang2025improving}     & 31.49 $\pm$ 0.18 & 0.815 $\pm$ 0.007 & 54.992 $\pm$ 0.134 & 33.28 $\pm$ 0.19  & 0.838 $\pm$ 0.007 & 55.169 $\pm$ 0.142 \\
    PnP-Flow~\cite{martin2025pnpflow}  & 32.22 $\pm$ 0.19 & \underline{0.837 $\pm$ 0.006} & 50.053 $\pm$ 0.047& 34.60 $\pm$ 0.21& 0.864 $\pm$ 0.006 & 49.892 $\pm$ 0.038\\
    FlowChef~\cite{patel2025flowchef}  & 30.76 $\pm$ 0.17 & 0.766 $\pm$ 0.005 &  \textbf{47.317 $\pm$ 0.019} & 33.32 $\pm$ 0.19 & 0.817 $\pm$ 0.006 & \textbf{48.894 $\pm$ 0.018} \\
    FlowDPS~\cite{kim2025flowdps}      & 30.73 $\pm$ 0.22 & 0.811 $\pm$ 0.007& 56.048 $\pm$ 0.220 & 34.05 $\pm$ 0.20 & 0.854 $\pm$ 0.007 & 54.165 $\pm$ 0.073 \\ \midrule 
    FlowSGS (VP)    & 30.36 $\pm$ 0.13 & 0.690 $\pm$ 0.006 & 60.096 $\pm$ 0.152& 34.65 $\pm$ 0.21 & 0.861 $\pm$ 0.006 & \underline{49.437 $\pm$ 0.022} \\
    FlowSGS (GVP)  & \underline{32.58 $\pm$ 0.19} & 0.825  $\pm$ 0.007 & 54.592  $\pm$ 0.102& \textbf{34.92 $\pm$ 0.22} & \textbf{0.870 $\pm$ 0.007} & 50.577 $\pm$ 0.028  \\
    FlowSGS (Linear) & \textbf{32.84 $\pm$ 0.20} & \textbf{0.839 $\pm$ 0.007} & 53.764 $\pm$ 0.063 & \underline{34.77  $\pm$ 0.22} & \underline{0.869 $\pm$ 0.007} & 50.420  $\pm$ 0.030   \\
    \bottomrule
    \end{tabular}
     \medskip 
 
    \begin{tabular}{lccccccccc}
    \toprule  
    \multirow{2}{*}{\bf Method}  & \multicolumn{3}{c}{\bf Super-resolution ($4\times$)} & \multicolumn{3}{c}{\bf Fourier phase retrieval} \\ 
    \cmidrule(lr){2-4} \cmidrule(lr){5-7} 
    & PSNR ($\uparrow$) & SSIM ($\uparrow$) & LPIPS ($\downarrow$) & PSNR ($\uparrow$) & SSIM ($\uparrow$) & LPIPS ($\downarrow$) \\ \midrule
    DPS~\cite{chung2023diffusion}      & 28.97 $\pm$ 0.21 & 0.822 $\pm$ 0.005 & 0.155 $\pm$ 0.004 & 18.17 $\pm$ 0.79& 0.513 $\pm$ 0.023 & 0.413 $\pm$ 0.022\\
    PnP-DM~\cite{wu2024principled}     & 28.71 $\pm$ 0.19 & 0.823 $\pm$ 0.005 & 0.179 $\pm$ 0.004 & 35.48 $\pm$ 0.92 & 0.931 $\pm$ 0.010 & 0.084 $\pm$ 0.014 \\
    DAPS~\cite{zhang2025improving}    & 28.99 $\pm$ 0.19 & 0.824 $\pm$ 0.005 & 0.172 $\pm$ 0.004 & 35.77 $\pm$ 0.16 & 0.926 $\pm$ 0.001 & 0.054 $\pm$ 0.003 \\
    PnP-Flow~\cite{martin2025pnpflow}  & 26.35 $\pm$ 0.19 & 0.776 $\pm$ 0.007 & 0.221 $\pm$ 0.005 & 30.76 $\pm$ 0.71 & 0.844 $\pm$ 0.017 & 0.128 $\pm$ 0.012 \\
    FlowChef~\cite{patel2025flowchef}  & 27.14 $\pm$ 0.13 & 0.711 $\pm$ 0.003 & 0.234 $\pm$ 0.005  & 20.80 $\pm$ 0.47 & 0.501 $\pm$ 0.018 & 0.380 $\pm$ 0.013\\
    FlowDPS~\cite{kim2025flowdps}      & 28.62 $\pm$ 0.19 & 0.819 $\pm$ 0.005 & 0.192 $\pm$ 0.004 & 27.99 $\pm$ 0.63 & 0.818 $\pm$ 0.013 & 0.187 $\pm$ 0.012 \\ \midrule 
    FlowSGS (VP)    & 27.35 $\pm$ 0.13 & 0.708 $\pm$ 0.004 & 0.206 $\pm$ 0.005 & 31.09 $\pm$ 0.83& 0.850 $\pm$ 0.014& 0.132 $\pm$ 0.015\\
    FlowSGS (GVP)  & \underline{29.02 $\pm$ 0.18
} & \underline{0.824 $\pm$ 0.005} & \underline{0.160 $\pm$ 0.004} & \textbf{38.45 $\pm$ 0.91} & \textbf{0.950 $\pm$ 0.011} & \textbf{0.056 $\pm$ 0.012} \\
    FlowSGS (Linear) & \textbf{29.49 $\pm$ 0.19} & \textbf{0.844 $\pm$ 0.004} & \textbf{0.149 $\pm$ 0.004} & \underline{37.52 $\pm$ 1.05} & \underline{0.940 $\pm$ 0.012} & \underline{0.076 $\pm$ 0.014}  \\
    \bottomrule
    \end{tabular}
\end{table}

\begin{table}[]
    \small
    \centering
    \caption{
        \textbf{Average sampling runtimes when drawing 8 posterior samples in parallel.}
        All timing is done on a single NVIDIA H100 GPU.
    }
    \label{tab:time}
    \begin{tabular}{lc}
        \toprule
         \bf Method & \bf Runtime (s) \\
         \midrule
         DPS~\cite{chung2023diffusion} & 103.1 $\pm$ 0.001 \\
         PnP-DM~\cite{wu2024principled} & 139.5 $\pm$ 0.006 \\
         DAPS~\cite{zhang2025improving} &  63.9 $\pm$ 0.011 \\
         PnP-Flow~\cite{martin2025pnpflow} & 5.5 $\pm$ 0.001\\
         FlowChef~\cite{patel2025flowchef} & 5.5 $\pm$ 0.001\\
         FlowDPS~\cite{kim2025flowdps} &  5.5 $\pm$ 0.000\\
         FlowSGS (Linear) & 52.9 $\pm$ 0.003\\
         FlowSGS (GVP) & 49.5 $\pm$ 0.001\\
         FlowSGS (VP-SDE) & 31.2 $\pm$ 0.000\\
         \bottomrule
    \end{tabular}
\end{table}
\begin{figure}
    \centering
    \includegraphics[width=\linewidth]{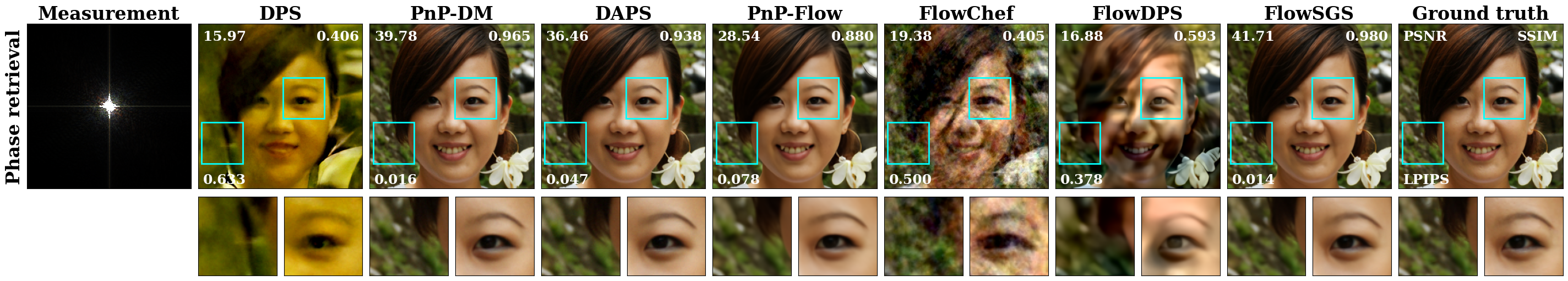}
    \includegraphics[width=\linewidth]{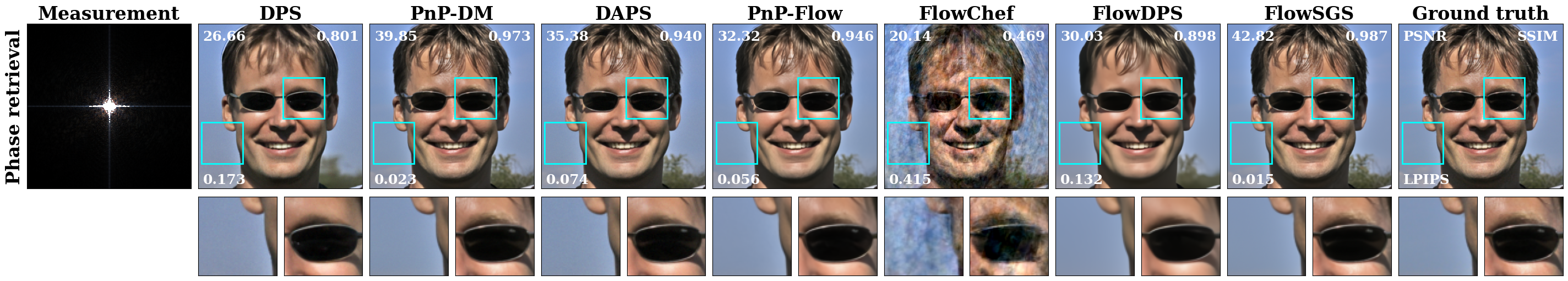}
    \caption{
        \textbf{Additional visual samples on Fourier phase retrieval.}
        We draw 8 posterior samples from each method and report the best one.
    }
    \label{fig:fpr_app}
\end{figure}

\subsection{Experiments using latent-space models}
\label{app:latent-space}

In~\Cref{fig:latent-space}, we show results on motion deblurring and $8\times$ super-resolution using a latent-space flow prior (Stable Diffusion 3.5 Medium~\cite{esser2024scaling}).
Problem setups are provided in~\Cref{app:exp_latent}.

\begin{figure}
    \centering
    \includegraphics[width=0.8\linewidth]{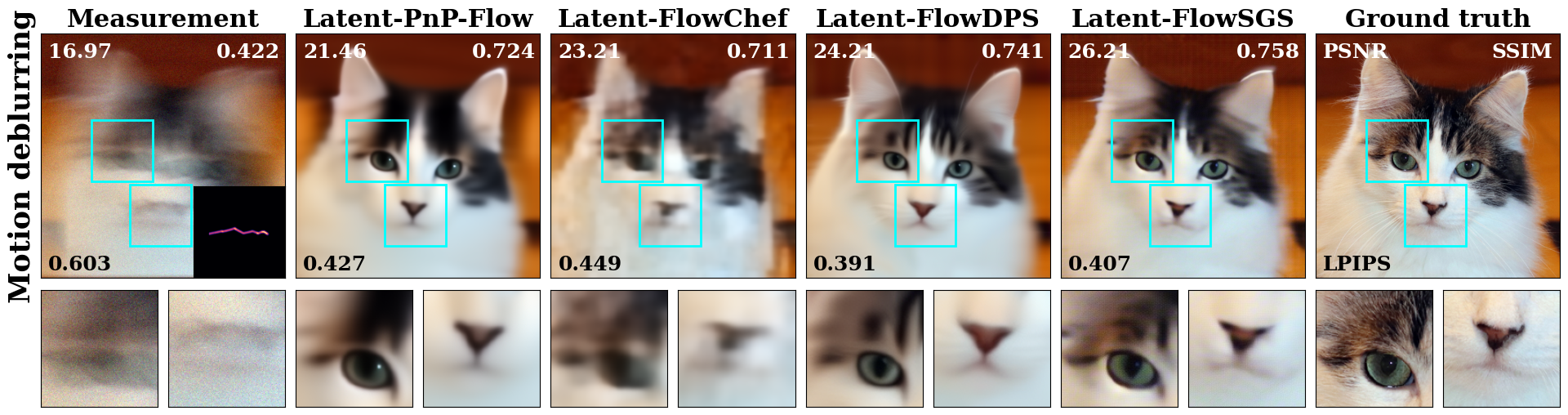}
    \includegraphics[width=0.8\linewidth]{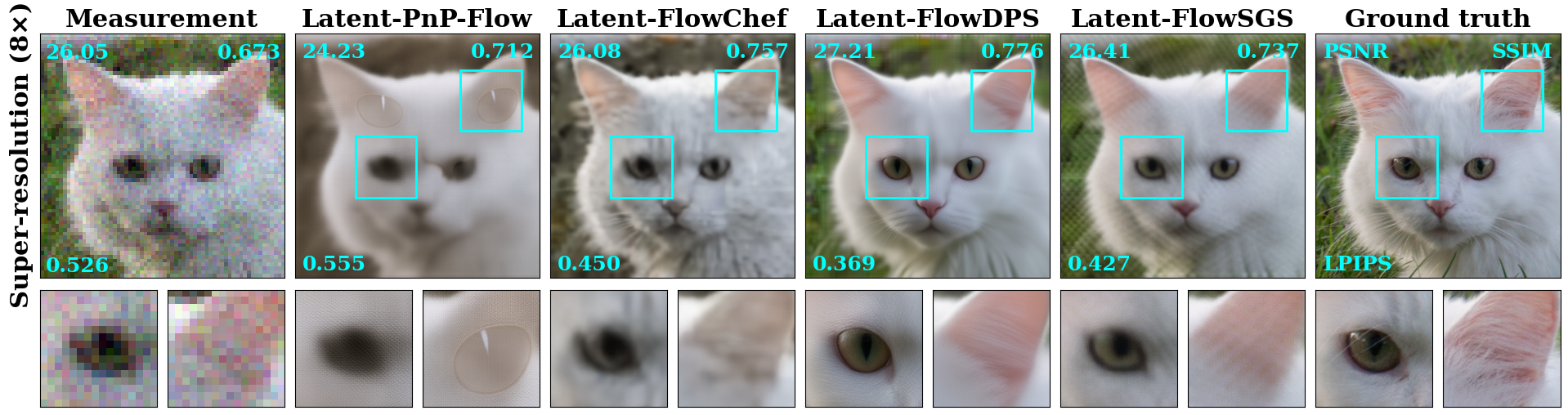}
    \caption{
        \textbf{Visual results using a latent-space flow model (Stable Diffusion 3.5).}
        We show one sample generated by each method on motion deblurring and 8$\times$ super-resolution.
    }
    \label{fig:latent-space}
\end{figure}

\subsection{Ablation study}
\label{app:ablation}
\paragraph{Key components in FlowSGS}
In~\Cref{tab:ablation}, we test the effectiveness of the key components in FlowSGS by removing them individually.
We test on motion deblurring using only 16 SDE discretization steps.
We show that timestep conversion, scaling, and timestep correction all lead to improved accuracy.

\paragraph{Interpolant schedules and diffusion coefficients}
In~\Cref{tab:ablation_interp}, we provide a quantitative comparison of different interpolant coefficients and diffusion coefficients on motion deblurring and Fourier phase retrieval.
Results are averaged over all 100 test images, with errors (standard deviation of the mean) provided.
We observe that the linear schedule and the KL-optimal coefficient together provide the best results.
Fourier phase retrieval is especially sensitive to interpolant schedules, where the linear interpolant significantly outperforms the others.

\begin{table}[]
    \small
    \centering
    \caption{
        \textbf{Ablation study on key components in the prior step of FlowSGS for motion deblurring.}
        We use the linear interpolant schedule, 16 discretization steps in the prior step SDE.
    }
    \label{tab:ablation}
    \begin{tabular}{lccc}
    \toprule
    \multirow{2}{*}{\bf Method} & \multicolumn{3}{c}{\bf Motion deblurring} \\ 
    \cmidrule(lr){2-4} 
    & PSNR ($\uparrow$) & SSIM ($\uparrow$) & LPIPS ($\downarrow$)  \\ \midrule
    FlowSGS (incorrect conversion) & 23.69 $\pm$ 0.28 & 0.491 $\pm$ 0.014 & 0.311 $\pm$ 0.022 \\
    FlowSGS (no scaling) & 7.04 $\pm$ 0.18 & 0.129 $\pm$ 0.017 & 0.694 $\pm$ 0.018 \\
    FlowSGS (no timestep correction) & \underline{24.57 $\pm$ 0.84} & \underline{0.741 $\pm$ 0.023} & \underline{0.219 $\pm$ 0.026} \\
    FlowSGS (complete) & \bf 27.13 $\pm$ 0.64 & \bf 0.771 $\pm$ 0.016 & \bf 0.190 $\pm$ 0.018 \\
    \bottomrule
    \end{tabular}
\end{table}

\begin{table}[h]
    \footnotesize
    \setlength{\tabcolsep}{3pt}
    \centering
    \caption{
        \textbf{Ablation results on interpolant schedules $(\alpha_t, \sigma_t)$ and diffusion coefficients $w_t$},  averaged over 100 test images with errors.
        We report the mean $\pm$ standard deviation of the mean image score for several metrics.
        For motion deblurring, we take the mean across 8 samples for each method and score the mean reconstruction. 
        For Fourier phase retrieval, performance is computed on the best of each batch of 8 samples rather than their mean.
    }
    \label{tab:ablation_interp}
    \begin{tabular}{llcccccc}
        \toprule
        \multirow{2}{*}{\bf \makecell[l]{Interpolant \\ schedule}} & \multirow{2}{*}{\bf \makecell[l]{Diffusion \\ Coefficient}}
        & \multicolumn{3}{c}{\bf Motion deblurring} & \multicolumn{3}{c}{\bf Fourier phase retrieval} \\
        \cmidrule(lr){3-5} \cmidrule(lr){6-8}
        & & PSNR\,($\uparrow$) & SSIM\, ($\uparrow$) & LPIPS\,($\downarrow$)
        & PSNR\,($\uparrow$) & SSIM\, ($\uparrow$) & LPIPS\,($\downarrow$) \\
        \midrule
        \label{app:ablation_steps}
        \multirow{4}{*}{\bf VP-SDE}
        & $w_t = 0$ (ODE)
            & 23.27 $\pm$ 0.39 & 0.692 $\pm$ 0.012 & 0.305 $\pm$ 0.015
            & 13.57 $\pm$ 0.21 & 0.474 $\pm$ 0.017 & 0.655 $\pm$ 0.008 \\
        & $w_t = \sigma_t$
            & 28.71 $\pm$ 0.30 & 0.824 $\pm$ 0.006 & 0.161 $\pm$ 0.005
            & 17.17 $\pm$ 0.59 & 0.640 $\pm$ 0.016 & 0.508 $\pm$ 0.018 \\
        & $w_t = \sin^2(\pi t)$
            & 28.35 $\pm$ 0.30 & 0.819 $\pm$ 0.006 & 0.168 $\pm$ 0.007
            & 16.55 $\pm$ 0.66 & 0.621 $\pm$ 0.017 & 0.554 $\pm$ 0.016 \\
        & $w_t = w_t^{\mathrm{KL}}$
            & 27.69 $\pm$ 0.23 & 0.709 $\pm$ 0.005 & 0.209 $\pm$ 0.006
            & 31.09 $\pm$ 0.83 & 0.850 $\pm$ 0.014 & 0.132 $\pm$ 0.015 \\
        \midrule
        \multirow{4}{*}{\bf Linear}
        & $w_t = 0$ (ODE)
            & 25.89 $\pm$ 0.20 & 0.719 $\pm$ 0.006 & 0.250 $\pm$ 0.006
            & 14.47 $\pm$ 0.22 & 0.416 $\pm$ 0.014 & 0.653 $\pm$ 0.007 \\
        & $w_t = \sigma_t$
            & 28.28 $\pm$ 0.27 & 0.822 $\pm$ 0.006 & 0.163 $\pm$ 0.005
            & 16.88 $\pm$ 0.60 & 0.650 $\pm$ 0.020 & 0.501 $\pm$ 0.017 \\
        & $w_t = \sin^2(\pi t)$
            & 28.73 $\pm$ 0.29 & \underline{0.830 $\pm$ 0.006} & \underline{0.158 $\pm$ 0.005}
            & 15.63 $\pm$ 0.40 & 0.614 $\pm$ 0.020 & 0.560 $\pm$ 0.011 \\
        & $w_t = w_t^{\mathrm{KL}}$
            & \textbf{29.22 $\pm$ 0.31} & \textbf{0.836 $\pm$ 0.006} & \textbf{0.154 $\pm$ 0.005}
            & \underline{37.52 $\pm$ 1.05} & \underline{0.940 $\pm$ 0.012} & \underline{0.076 $\pm$ 0.014} \\
        \midrule
        \multirow{4}{*}{\bf GVP}
        & $w_t = 0$ (ODE)
            & 25.43 $\pm$ 0.31 & 0.714 $\pm$ 0.008 & 0.273 $\pm$ 0.009
            & 14.30 $\pm$ 0.21 & 0.437 $\pm$ 0.014 & 0.655 $\pm$ 0.007 \\
        & $w_t = \sigma_t$
            & 28.16 $\pm$ 0.31 & 0.819 $\pm$ 0.006 & 0.180 $\pm$ 0.007
            & 15.45 $\pm$ 0.28 & 0.603 $\pm$ 0.018 & 0.559 $\pm$ 0.011 \\
        & $w_t = \sin^2(\pi t)$
            & 27.81 $\pm$ 0.29 & 0.811 $\pm$ 0.006 & 0.195 $\pm$ 0.008
            & 14.75 $\pm$ 0.25 & 0.567 $\pm$ 0.017 & 0.601 $\pm$ 0.009 \\
        & $w_t = w_t^{\mathrm{KL}}$
            & \underline{29.16 $\pm$ 0.29} & 0.825 $\pm$ 0.006 & 0.162 $\pm$ 0.005
            & \textbf{38.45 $\pm$ 0.91} & \textbf{0.950 $\pm$ 0.011} & \textbf{0.056 $\pm$ 0.012} \\
        \bottomrule
    \end{tabular}
\end{table}

\paragraph{Number of SDE discretization steps}
In~\Cref{fig:ablation_steps}, we show visual samples from the ablation study on the SDE discretization steps in~\Cref{fig:timestep_conversion}c for motion deblurring.
We see that FlowSGS with timestep correction is more robust under a small discretization step setting, while PnP-DM drastically fails due to high discretization error from the diffusion model.

\begin{figure}
    \centering
    \includegraphics[width=\linewidth]{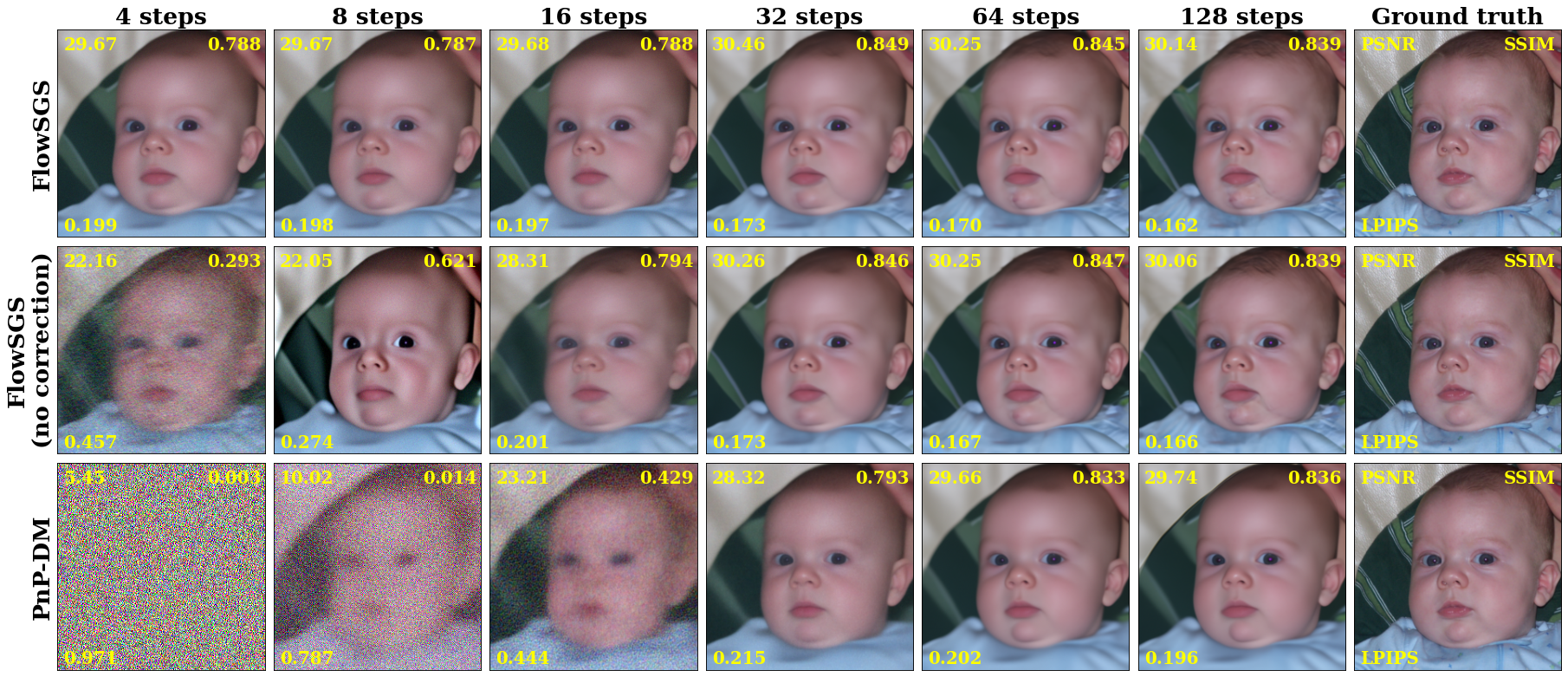}
    \caption{
        \textbf{Visual samples of the ablation study on the SDE discretization steps for motion deblurring.}
        We draw 8 posterior samples for each method and show the mean image.
        }
    \label{fig:ablation_steps}
\end{figure}

\subsection{Hyperparameter sensitivity}
\label{sec:sensitivity}

FlowSGS exposes several interdependent hyperparameters: the annealing schedule
$(\rho_0, \rho_{\min}, \alpha)$ and the likelihood-step parameters (noise model $\tau$,
Langevin step size $\eta$). We report a one-factor-at-a-time sweep around the default
configuration of Table~4, varying each parameter while holding the others fixed, on
motion deblurring over 20 held-out test images. Results are given in~\Cref{tab:sensitivity}; the default setting is marked $^\dagger$ and yields LPIPS $=0.188$ in every block, as expected. Sensitivity to the number of prior-step discretization steps $N$ is reported separately in Figure~5c and Section~D.3.

Only two hyperparameters materially affect reconstruction quality. The first is
$\rho_{\min}$, which sets the stationary coupling between the likelihood and prior
steps and hence the starting time $t_k$ of the final prior step that refines the output;
LPIPS degrades from $0.188$ to $0.259$ as $\rho_{\min}$ grows to $1.0$, since a large
terminal coupling leaves the prior step starting too far up the interpolant to commit
to a measurement-consistent solution. The second is $\tau$, the assumed measurement
noise level, which plays the same role as the measurement weight in DPS~\cite{chung2023diffusion}, FlowChef~\cite{patel2025flowchef}, FlowDPS~\cite{kim2025flowdps}, and
PnP-Flow~\cite{martin2025pnpflow}; performance peaks when $\tau$ matches the true noise
level in the measurements and degrades on both sides. By contrast, FlowSGS is robust to
the starting level $\rho_0$ and the decay rate $\alpha$, each of which varies LPIPS by
less than $0.008$ across more than an order of magnitude. We note that the Langevin step
size $\eta$ improves monotonically over the range tested, indicating that our default is
conservative rather than tuned to an interior optimum. For context on the parameter count,
principled posterior samplers such as PnP-DM~\cite{wu2024principled} and DAPS~\cite{zhang2025improving} carry a comparable number; several of ours arise in the Langevin likelihood step and could be eliminated for linear forward models by a proximal likelihood sampler~\cite{wu2024principled}. We retain Langevin for generality across linear and nonlinear operators.

\begin{table}[h]
    \centering
    \caption{
        \textbf{Hyperparameter sensitivity on motion deblurring averaged over 20 test images.}
        $^\dagger$ marks the default used in the main experiments.
        Sensitivity to the number of prior-step discretization steps is reported in~\Cref{fig:timestep_conversion}c.
    }
    \label{tab:sensitivity}
    \small
    \begin{tabular}{llcccccc}
        \toprule
        \bf Parameter & & \multicolumn{6}{c}{\bf Value} \\
        \midrule
        \multirow{2}{*}{Minimum level $\rho_{\min}$}
         & Value           & 0.02  & 0.05  & 0.1$^\dagger$ & 0.2   & 0.5   & 1.0   \\
         & LPIPS ($\downarrow$) & 0.219 & 0.206 & 0.188 & 0.193 & 0.232 & 0.259 \\
        \midrule
        \multirow{2}{*}{Starting level $\rho_0$}
         & Value           & 2.0   & 5.0   & 10.0$^\dagger$ & 20.0  & 50.0  &       \\
         & LPIPS ($\downarrow$) & 0.186 & 0.188 & 0.188 & 0.190 & 0.193 &       \\
        \midrule
        \multirow{2}{*}{Decay rate $\alpha$}
         & Value           & 0.80  & 0.85  & 0.90$^\dagger$ & 0.93  & 0.95  &       \\
         & LPIPS ($\downarrow$) & 0.192 & 0.192 & 0.188 & 0.190 & 0.193 &       \\
        \midrule
        \multirow{2}{*}{Noise level $\tau$}
         & Value           & 0.01  & 0.02  & 0.05$^\dagger$ & 0.1   & 0.2   &       \\
         & LPIPS ($\downarrow$) & 0.573 & 0.357 & 0.188 & 0.248 & 0.335 &       \\
        \midrule
        \multirow{2}{*}{Langevin step size $\eta$}
         & Value           & 1e-4  & 2e-4  & 5e-4$^\dagger$ & 1e-3  & 2e-3  & 5e-3  \\
         & LPIPS ($\downarrow$) & 0.237 & 0.209 & 0.188 & 0.175 & 0.169 & 0.158 \\
        \bottomrule
    \end{tabular}
\end{table}

\section{Broader Impacts}

We demonstrate that this method is effective in a medical imaging application (accelerated single-coil MRI) and believe it could broadly impact scientific and medical imaging in a positive way. However, inverse imaging enables reconstruction of scenes that may have been considered private before such methods existed. While we do not directly test on such applications, we do show reconstructions of faces, which suggests that this method might be used to identify people in situations that would previously have been anonymous. Additionally, we use 2 NVIDIA H100 GPUs for pretraining, and a single H100 GPU for sampling.


\newpage

\end{document}